\newcommand{\E}{\mathbb{E}}
\newcommand{\PR}[1]{{\rm{\mathbb{P}}}({#1})}
\newcommand{\W}{\widetilde}
\newcommand{\states}{\mathcal{S}}
\newcommand{\actions}{\mathcal{A}}
\newcommand{\mat}[1]{\mathbf{#1}}
\newtheorem{theorem}{Theorem}
\newtheorem{lemma}{Lemma}
\newtheorem{corollary}{Corollary}
\newenvironment{itemize*}%
{\begin{itemize}[leftmargin=*,topsep=0pt]%
		\setlength{\itemsep}{0pt}%
		\setlength{\parskip}{0pt}}%
	{\end{itemize}}
\newenvironment{enumerate*}%
{\begin{enumerate}[leftmargin=*,topsep=0pt]%
		\setlength{\itemsep}{0pt}%
		\setlength{\parskip}{0pt}}%
	{\end{enumerate}}
\newtheorem{fact}{Fact}
\title{
A Provably Efficient Algorithm for Linear Markov Decision Process with Low Switching Cost \thanks{
Minbo Gao and Tianle Xie contribute equally in this paper.
	Correspondence to: Simon S. Du <ssdu@cs.washington.edu>, Lin F. Yang <linyang@ee.ucla.edu>
}
}
\author{
  Minbo Gao $^*$\\
  Tsinghua University\\
  \texttt{gmb17@mails.tsinghua.edu.cn} \\
  \and
  Tianle Xie $^*$\\
  Tsinghua University\\
  \texttt{xtl17@mails.tsinghua.edu.cn} \\
  \and
   Simon S. Du\\
  University of Washington\\
  \texttt{ssdu@cs.washington.edu} \\  
  \and
  Lin F. Yang\\
  University of California, Los Angeles\\
  \texttt{linyang@ee.ucla.edu} \\  
}
\begin{document}

\maketitle

\begin{abstract}
	Many real-world applications, such as those in medical domains, recommendation systems, etc, can be formulated as large state space reinforcement learning problems with only a small budget of the number of policy changes, i.e., low switching cost.
This paper focuses on the linear Markov Decision Process (MDP) recently studied in \cite{yang2019sample,jin2019provably} where the linear function approximation is used for generalization on the large state space.
We present the first algorithm for linear MDP with a low switching cost.
Our algorithm achieves an $\widetilde{O}\left(\sqrt{d^3H^4K}\right)$ regret bound with a near-optimal $O\left(d H\log K\right)$ \emph{global} switching cost where $d$ is the feature dimension, $H$ is the planning horizon and $K$ is the number of episodes the agent plays.
Our regret bound matches the best existing polynomial algorithm by \cite{jin2019provably} and our switching cost is exponentially smaller than theirs.
When specialized to tabular MDP, our switching cost bound improves those in \cite{bai2019provably,zhang2020optimal}.
We complement our positive result with an $\Omega\left(dH/\log d\right)$ global switching cost lower bound for any no-regret algorithm.

\end{abstract}

\section{Introduction}
\label{sec:intro}
Reinforcement learning (RL) is often used for modeling real-world sequential decision-making problems such as medical applications \citep{mahmud2018applications, istepanian2009medical}, personalized recommendation \citep{zheng2018drn, zhao2018deep}, hardware placements \citep{mirhoseini2017device}, database optimization \citep{krishnan2018learning}, etc.
For these applications, oftentimes it is desirable to restrict the agent from adjusting its policy frequently.
For instance, in medical domains, changing a policy requires a thorough approval process by experts \citep{lei2012smart, almirall2012designing, almirall2014introduction};
for large-scale software and hardware systems, changing a policy requires to change the physical environment significantly \cite{mirhoseini2017device}.
Formally, we would like our RL algorithm admits a \emph{low switching cost}.
In this setting, the agent is only allowed to switch its policy for at most $N$ times, where $N$ is much smaller than the total number of rounds played.

The problem of designing provably efficient RL algorithm with low switching cost was first studied in \cite{bai2019provably} where authors proposed a $Q$-learning algorithm with upper confidence bound (UCB) bonus for tabular RL problems.
Their algorithm  achieves both a low regret and a low switching cost. 
More detailed discussions are included in the related work part.
However, two major problems remain open in the field:
\begin{enumerate}
	\item \textbf{Large state space}: 
	\citet{bai2019provably} studied tabular RL settingand their bounds scale polynomially with the number of states.
	Many aforementioned applications have a large state space. In medical domains, the state space can be all possible combinations of features that describe a patient.
	In a large scale system, every configuration is one state.
	For these problems, we need to use function approximation to generalize across states and keep the switching cost low at the same time.
	\item \textbf{Global switching cost}:
	\citet{bai2019provably} studied the local switching cost, the sum of number of changes of the policy on the state in the episode.  
	However, arguably in most applications, we are more interested in the global switching cost, which is the number of policy changes.
	For example, for medical domains, the cost of changing the entire policy is similar to that of changing a component of the policy (the decision on a specific state).
	Furthermore, for RL problems with a large state space, it is natural to study the global switching cost because the local switching cost necessarily scales with the number of states, which is already large. On the hand, the global switching cost need not to scale with the number of states, and thus it is a more meaningful quantity to characterize.
\end{enumerate}

In this paper we tackle these two problems head-on in the linear Markov Decision Process (MDP) recently studied in \cite{yang2019sample,jin2019provably}, in which the linear function is used for generalization across states.
Our contributions are summarized below.
\begin{itemize}
	\item We present the first provably efficient algorithm for linear MDP with low switching cost.
	Our algorithm enjoys $\widetilde{O}\left(\sqrt{d^3H^4K}\right)$ regret and $O\left(dH \log K\right)$ global switching cost where $d$ is the feature dimension, $H$ is the planning horizon and $K$ is the number of episodes the agent plays.
	The regret bound matches the best existing polynomial algorithm by \cite{jin2019provably} and the switching cost is significantly lower.
	Furthermore, since tabular MDP is a special case of linear MDP, our result directly implies an $O\left(SAH\log K\right)$ global switching cost bound where $S$ is the number of states and $A$ is the number of actions.
	\item We provide an $\Omega\left(dH/\log d\right)$ global switching cost lower bound for Linear MDP. 
	To our knowledge, no previous work provides global switching cost lower bound for MDP, let alone linear MDP.
	For comparison, \citet{bai2019provably} derived a local switching cost lower bound but it only implies an $\Omega\left(A\right)$ global switching cost lower bound. 
	
\end{itemize}

\section{Related Work}
\label{sec:rel}
Here we discuss related theoretical works.
There is a long line of works studying the sample complexity of tabular reinforcement learning~\citep{kearns1999finite, kakade2003sample, singh1994upper, azar2013minimax, sidford2018variance, sidford2018near,agarwal2019optimality,zanette2019almost,li2020breaking,azar2017minimax,dann2015sample,dann2017unifying,dann2019policy,jin2018qlearning,strehl2006pac,zhang2020optimal,max2019nonasymptotic,zanette2019tighter,dong2019qlearning,wang2020long,zhang2020reinforcement}.
The state-of-the-art analysis shows that one can obtain $O\left(\sqrt{H^3SAK}\right)$  regret\footnote{This regret bound applies to the setting where the transition probabilities can be different at each level and the reward at each level is bounded by $1$.} and this is tight~\citep{dann2015sample,osband2016on,jin2018qlearning}.

However, for real-world problems, the state space is often large, so we need to use function approximation.
Developing provably efficient algorithms for large state space RL problems is a hot topic recently~\citep{wen2013efficient,li2011knows,du2019good,du2020agnostic,krishnamurthy2016pac,jiang2017contextual,dann2018oracle,du2019provably,sun2018model,du2019q,feng2020provably,yang2019sample,yang2019reinforcement,jin2019provably,zanette2019frequentist,zanette2020learning,wang2020provably,wang2019optimism,cai2019provably,ayoub2020model}.
These works are based on different assumptions.
Our paper follows the setup of linear MDP \citep{yang2019sample,jin2019provably} where linear function is used for generalizations on the transition and the reward. See Section~\ref{sec:pre} for the precise definition.
Our algorithm is inspired by the one in \cite{jin2019provably}, which is a polynomial time algorithm.
Recently, \citet{zanette2020learning} gave an algorithm which has better regret than the one in \cite{jin2019provably}, but it is not computationally efficient.\footnote{
\citet{zanette2020learning} actually only requires a low inherent Bellman error condition, which is weaker than the linear MDP assumption.
}

Low switching cost algorithms were first studied in the bandit setting~\citep{auer2002finite,cesa2013online}.
To our knowledege, \cite{bai2019provably} is the first work studying the switching cost problem in RL.
\citet{bai2019provably} focused on the local switching cost which is defied as $\sum_{k=1}^{K-1} |\{(h,x) \in [H] \times \mathcal{S}:$ $ \pi_k^h(x) \neq \pi_{k+1}^h(x)\}|$ 
where $\mathcal{S}$ is the state space and $\pi_k^h$ is the policy at the $h$-the level in the $k$-th episode.
Our paper focuses on the global switching cost (cf. Equation~\eqref{eqn:switching_cost}) which is often more natural in applications.
\citet{bai2019provably} provided an $O\left(H^3SA\log\left(K/A\right)\right)$ local switching upper bound.
The upper bound was improved to $O\left(H^2SA\log\left(K\right)\right)$ by \citet{zhang2020optimal}.
As direct corollary of our main result, we can obtain an $O\left(HSA\log K\right)$ local switching cost upper bound.
\citet{bai2019provably} also provided a $\Omega\left(HSA\right)$ local switching cost lower bound.
However, this lower bound only implies a trivial $\Omega\left(A\right)$ global switching cost lower bound.
In this paper we provide an $\Omega\left(dH/\log d\right)$ lower bound, which is the first non-trivial lower bound for the global switching cost in RL.

\section{Preliminaries}
\label{sec:pre}
\subsection{Notations} We use $\norm{\cdot}$ to denote the standard Euclidean norm.
Given a positive integer $N$, we let $[N]=\left\{1,2,\ldots,N\right\}$.
For a matrix $A$, we use $\det(A)$ 
to denote its determinant.
 For two symmetric matrices, $A$ and $B$, $A \preccurlyeq B$ means the matrix $B - A$ is positive semidefinite.
We use the standard $O(\cdot)$ and $\Omega(\cdot)$ notations to hide universal constant factors, and $\widetilde{O}$, and $\widetilde{\Omega}$ notations to hide logarithmic factors.

\subsection{Markov Decision Process}
Throughout our paper, we consider the episodic Markov decision model $( \mathcal{S}, \mathcal{A}, \mathrm{H}, \mathbb{P}, \mathrm{r} )$. 
In this model,  $\mathcal{S}$ and $\mathcal{A}$ denote the set of states and actions, respectively. 
All the episodes have the same number of transitions taking place, which we use $\mathrm{H} \in N$ to denote. $\mathbb{P} = \{ \mathbb{P}_h \}$ is the set of transition probability measures. Hence $\mathbb{P}_h(x'|x,a)$ means the transition probability of taking action $a$ at step $h\in [H] = \{1,2,\cdots, H \}$ on state $x$ to the state $x'$. 
$\mathrm{r}$ is a collection of reward functions $r_h:\mathcal{S}\times \mathcal{A}\to [0,1]$ for each step in an episode. 

The dynamics of the episodic MDP can be view as the interaction of an agent with the environment periodically. 
At the beginning of an episode $k$, an arbitrary state $x_1^k \in \states$ is selected by the environment, and the agent is then in step $1$. At each step $h$ in this episode, based on the current state $x_h^k \in \states$ and the history information, the agent needs to decide which action to take. After action $a_h^k \in \actions $ is chosen, the environment will give the reward for the step $r_h(x_h^k, a_h^k)$ and move the agent to the next state $x_{h+1} \in \states$. 
The episode automatically ends when the agent reaches the step $H+1$. In other words, the agent will take at most $H$ actions and receive corresponding rewards in each episode.

To clarify the choice of actions for the agent in the episode, we define the policy function $\pi: \mathcal{S} \times [H]\to \mathcal{A}$. Namely, $\pi (x,h)$ is the action taken on state $x$ at step $h$ by the agent. 
We use $Q$-function to evaluate the long-term value for the action $a$ and subsequent decisions. The $Q$-function is defined as follows:

\begin{equation}\label{Q-function}
\begin{split}
    Q_{h}^{\pi}(x, a) := r_{h}(x, a)
    + \mathbb{E}\left. \left[ \sum_{i=h+1}^{H} r_{i}\left(x_{i}, \pi\left(x_{i}, i \right)\right) \right| x_{h}=x, a_{h}=a\right]
\end{split}
\end{equation}
In addition, we define the value function $V_h^{\pi}: \mathcal{S}\to \mathbb{R} $ for the policy $\pi$ via the following formula:
\begin{equation}\label{value function}
    V_{h}^{\pi}(x):=\mathbb{E}\left. \left[\sum_{i=h}^{H} r_{i}\left(x_{i}, \pi\left(x_{i}, i\right)\right) \right| x_{h}=x \right].
\end{equation}
The $Q$-function and $V$-function obey the following Bellman equation: for any policy $\pi$,
\begin{align*}
    Q_{h}^{\pi}(x, a)=&\left(r_{h}+\mathbb{P}_{h} V_{h+1}^{\pi}\right)(x, a), \quad V_{h}^{\pi}(x)=Q_{h}^{\pi}\left(x, \pi_{h}(x)\right),
    \quad \text{and}\quad V_{H+1}^{\pi}(x)=0,
\end{align*}
where
\[
\left[\mathbb{P}_{h} V_{h+1}\right](x, a):=\mathbb{E}_{x^{\prime} \sim \mathbb{P}_{h}(\cdot | x, a)} V_{h+1}\left(x^{\prime}\right).
\]
We denote $V_h^{*} (x) = \sup_{\pi} V_h^{\pi}(x)$, $Q_h^*(x,a) = \sup_{\pi} Q_h^{\pi} (x,\pi(x))$ as the optimal value and $Q$-functions.
The Bellman equation also holds for $V_h^*$ and $Q_h^*$ with respect to the optimal policy $\pi^*$.

Suppose an agent is allowed to interact with the MDP for $K$ episodes and plays policy $\pi_k$ at episode $k\in[K]$. We use regret to measure the performance of its algorithm, which is 
the difference of the value of the optimal policy and the policy adopted by the agent. 
\begin{equation}
    \operatorname{Regret}(K)=\sum_{k=1}^{K}\left[V_{1}^{\star}\left(x_{1}^{k}\right)-V_{1}^{\pi_{k}}\left(x_{1}^{k}\right)\right]
\end{equation}

\subsection{Linear Markov Decision Process}
\label{LMDPassumption}

The focus of our study is the linear MDP model~\citep{yang2019sample,jin2019provably}.
Linearity here represents that the transition probability and the reward are linear functions given the feature map.
 Formally, there exists a map from the state-action space to the feature space, namely $\phi: \mathcal{S}\times \mathcal{A} \to \mathbb{R}^d$ and a measure $\mu_h$ for $h \in [H]$ such that $\forall (x,a)\in \mathcal{S}\times \mathcal{A}$
\begin{align*}
	\mathbb{P}_h (\cdot | x,  a) = \left< \phi(x, a),\mu_h (\cdot) \right>, \quad\text{and}\quad 
	r_h(x, a) = \left\langle \phi(x, a), \theta_h\right\rangle.
\end{align*}
We further assume that$ \| \phi(x,a) \| \le 1$, $\forall x\in \mathcal{S}, \|\mu_h(x)\|\le \sqrt{d}$, and $\|\theta_h\|\le \sqrt{d}$.
Linear MDP model is a strict generalization of the standard tabular RL model with $d = \abs{\states}\abs{\actions}$.
For each $\left(s,a\right) \in \states \times \actions$, we can let $\phi\left(s,a\right) = e_{(s,a)}$
be the canonical basis in $\mathbb{R}^d$.
Then we can just define $\left\langle e_{(s,a)},\mu_h\left(\cdot\right)\right\rangle= \mathbb{P}_h\left(\cdot \mid s,a\right)$ and $\left\langle e_{(s,a)}, \theta_h\right\rangle = r_h(s, a)$.
See \cite{yang2019sample,jin2019provably} for more examples.

\subsection{Switching Cost}

The concept of switching cost is used to quantify the adaptability of reinforcement learning algorithms.  
The main focus of our work is the global switching cost, which counts the number of policy changes   in the running of the algorithm in $K$ episodes, namely:
\begin{align}
	N_{\text{switch}} ^{\text{gl}} \triangleq \sum_{k = 1}^{K-1} \mathbb{I} \{ \pi_k \not = \pi_{k+1} \} \label{eqn:switching_cost}
\end{align}
The focus of \cite{bai2019provably} is local switching cost:
\begin{align}
N_{\text{switch}}^{\text{loc}} \triangleq\sum_{k=1}^{K-1} \left|\left\{(h,x) \in [H] \times \mathcal{S} : \pi_k^h(x) \neq \pi_{k+1}^h(x)\right\}\right| \label{eqn:switching_cost_loc}
\end{align}
Technically, we always have 
$$N_{\text{switch}} ^{\text{gl}} \le N_{\text{switch}}^{\text{loc}} \le \abs{\states}HN_{\text{switch}} ^{\text{gl}}. $$
One crucial reason to use the global switching cost is that the definition of local switching cost is based on the number of states, which can be infinite in the linear MDP model, so the global switching cost is more meaningful quantity to study.
Lastly, we emphasize that in the study of the switching cost, we only consider deterministic policies.
Note that, by the Bellman optimality equation, there exist at least one optimal policy that is deterministic.

\section{Algorithm and Result}
\label{sec:algoIntro}
In this section, we describe our main positive result.
We first describe our approach, which is listed in Algorithm ~\ref{algo:lowSwitchingAlgo}.
Our algorithm has two crucial components: a $Q$-function estimation step and a policy update step.
This estimation step largely follows the method in \cite{jin2019provably}.
To achieve low-switching cost, the planning step is novel.
We use the determinant of the feature covariance matrix to guard the change of the policies.

More formally, in line~\ref{lin:estimate_start} - line~\ref{lin:estimate_end}, we use UCB to obtain an optimistic estimate of the optimal $Q$-function.
In Line~\ref{lin:estimate_start}, we define $\Lambda_h^k$ to be the empirical covariance matrix based on all features collected at level $h$, in which  a small regularization term $\lambda \mat{I}$ is added to avoid degeneracy.
In Line~\ref{lin:ls}, we use least-square to estimate the linear coefficient, where we construct labels as $r_{h}\left(x_{h}^{\tau}, a_{h}^{\tau}\right)+\max _{a} \widetilde Q_{h+1}^k \left(x_{h+1}^{\tau}, a\right)$, following the Bellman Equation.
In Line~\ref{lin:estimate_end}, we define our optimistic estimate of $Q$-function as the summation of a linear function $(\mathbf{w}_{h}^k)^{\top} \boldsymbol{\phi}(\cdot, \cdot)$ and a bonus term $\beta\left[\boldsymbol{\phi}(\cdot, \cdot)^{\top}  (\Lambda_{h}^k)^{-1} \boldsymbol{\phi}(\cdot, \cdot)\right]^{1 / 2}$.
The bonus term ensures our estimate is optimistic (cf.~Equation\eqref{B.5}).
We also clips the value to $H$ if it is too large.
We refer readers to \cite{jin2019provably} for more intuitions about this estimation procedure.

For policy update, the policy at the $k$-th episode in \cite{jin2019provably} is just to choose the action that maximizes the optimistic estimate $\widetilde{Q}^{k}_h$.
Since $\widetilde{Q}^{k}_h$ is changing at every episode, the policy changes at every episode as well.
Therefore, the switching cost can be linear in the number of episode.
Our main technique to reduce the switching cost is a new criteria to decide whether to update the policy.
More specifically, note when executing the policy, we always choose the action according to $Q_h^k$ (cf. Equation~\eqref{lin:take_action}) and $Q_h^k$ is updated according to $\widetilde{Q}_h^{\tilde{k}}$ in line~\ref{lin:update_q}, where $\widetilde{Q}_h^{\tilde{k}}$ is a reference $Q$-function estimate which changes infrequently.
Note that, as we will show shortly, $Q_h^k$ does not change frequently.
We use $\tilde{k}$ as a reference counter, which is updated  only in line~\ref{lin:change_reference} when the criteria in line~\ref{lin:criteria} is met.

Now we explain our proposed criteria.
At a high-level, since the empirical co-variance matrix $\Lambda_h^k$ determines both our estimate of $Q$-function and the bonus, if it changes a significant amount, this means we already learned new information and we need to change the policy to achieve low regret.
Note this step is computationally efficient because we just need to check the least eigenvalue of $2 (\Lambda_h^{k})^{-1} - (\Lambda_h^{\tilde{k}})^{-1}  $ is non-negative or not.
Geometrically, $(\Lambda_h^{\tilde{k}})^{-1}  \not \preccurlyeq 2 (\Lambda_h^{k})^{-1}$ represents that,  at the $k$th episode, there exists one direction at which we have learned twice information as the information we learned at the reference episode $\tilde{k}$.
We will explain more technical reasons in the next section.

\begin{algorithm}[t]
\caption{ Algorithm for Linear MDP with Low Global Switching Cost \label{algo:lowSwitchingAlgo}
}
\begin{algorithmic}[1]
\State \textbf{Input:} regularization parameter $\lambda > 0$.
\State Set $\tilde{k} \leftarrow1$.
\For{episode $k = 1, 2, \cdots, K$}
\Statex\textit{$Q$-function Estimation}
\For{step $h = H, \cdots, 1$}
\State $\Lambda_h^k \leftarrow \sum_{\tau=1}^{k-1} \boldsymbol{\phi}\left(x_{h}^{\tau}, a_{h}^{\tau}\right) \boldsymbol{\phi}\left(x_{h}^{\tau}, a_{h}^{\tau}\right)^{\top}+ \lambda \cdot \mathbf{I}$  \label{lin:estimate_start}
\State $\mathbf{w}_{h}^k \leftarrow  (\Lambda_{h}^k)^{-1} \sum_{\tau=1}^{k-1} \boldsymbol{\phi}\left(x_{h}^{\tau}, a_{h}^{\tau}\right)\Big[ r_{h}\left(x_{h}^{\tau}, a_{h}^{\tau}\right) +\max _{a} \widetilde Q_{h+1}^k \left(x_{h+1}^{\tau}, a\right)\Big]$ \label{lin:ls}
\State $\widetilde Q_{h}^{k}(\cdot, \cdot) \leftarrow \min \Big\{(\mathbf{w}_{h}^k)^{\top} \boldsymbol{\phi}(\cdot, \cdot) + \beta\left[\boldsymbol{\phi}(\cdot, \cdot)^{\top}  (\Lambda_{h}^k)^{-1} \boldsymbol{\phi}(\cdot, \cdot)\right]^{1 / 2}, H\Big\}$ \label{lin:estimate_end}
\Statex\textit{Policy Update}
\If{ $  (\Lambda_h^{\tilde{k}})^{-1}  \not \preccurlyeq 2 (\Lambda_h^{k})^{-1}$ } \label{lin:criteria}
\State Set $\tilde{k} \leftarrow k$ \label{lin:change_reference}
\EndIf
\State Set $Q_h^k \leftarrow \widetilde Q_h^{\tilde{k}}$ \label{lin:update_q}
\EndFor
\EndFor
\Statex\textit{Policy Execution}
\State Receive the initial state $x_1^k.$
\For{step $h = 1, 2, \cdots, H$}
\State Take action $a_{h}^{k} \leftarrow \arg \max_{a} Q_h^k\left(x_h^k,a\right)$.\label{lin:take_action}
\State Observe $x_{h+1}^k$.
\EndFor
\end{algorithmic}
\end{algorithm}

We now state the main theorem on the  regret bound and the switching cost bound of our algorithm.
\begin{theorem}[Regret and Switching Cost of Algorithm~\ref{algo:lowSwitchingAlgo} for Linear MDP]\label{thm:main}
In the linear  MDP setting, there exists a constant $c > 0$ such that, for any $p\in (0,1)$ , if we set $\lambda = 1$ and $\beta = c dH\sqrt{\iota}$  with $\iota = \log (2dKH/p)$ in Algorithm~\ref{algo:lowSwitchingAlgo}, then with probability $ 1 - p$, the total regret is at most $O(\sqrt{d^3 H^4 K \iota^2})$. Furthermore, the global switching cost of the algorithm is bounded by $O( dH\log K)$.
\end{theorem}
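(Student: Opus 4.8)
The plan is to prove the two claims separately: the global switching cost via a determinant-doubling counting argument on each level, and the regret via an adaptation of the LSVI-UCB analysis of \cite{jin2019provably} in which the only new ingredient is controlling the error incurred by acting with the \emph{stale} reference estimate $\widetilde Q_h^{\tilde k}$ instead of the fresh $\widetilde Q_h^k$.

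\textbf{Switching cost.} First I would observe that a global switch can only occur in an episode in which the reference counter $\tilde k$ is advanced at some level, in Line~\ref{lin:change_reference}; hence it suffices to bound, for each fixed $h$, the number of times the criterion in Line~\ref{lin:criteria} fires, and then sum over $h\in[H]$. Fix $h$ and consider a switch at episode $k$ with current reference $\tilde k\le k$. Since $\Lambda_h^k=\Lambda_h^{\tilde k}+\sum_{\tau=\tilde k}^{k-1}\boldsymbol\phi(x_h^\tau,a_h^\tau)\boldsymbol\phi(x_h^\tau,a_h^\tau)^\top\succeq \Lambda_h^{\tilde k}\succ 0$, the matrix $N:=(\Lambda_h^k)^{1/2}(\Lambda_h^{\tilde k})^{-1}(\Lambda_h^k)^{1/2}$ satisfies $N\succeq I$, so every eigenvalue of $N$ is at least $1$, while $\det N=\det(\Lambda_h^k)/\det(\Lambda_h^{\tilde k})$. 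The firing condition $(\Lambda_h^{\tilde k})^{-1}\not\preccurlyeq 2(\Lambda_h^k)^{-1}$ is exactly $\lambda_{\max}(N)>2$, and since the remaining eigenvalues are $\ge 1$ this forces $\det(\Lambda_h^k)>2\det(\Lambda_h^{\tilde k})$. Thus each switch at level $h$ at least doubles the reference determinant. With $\lambda=1$ we have $\det(\Lambda_h^1)=1$ and, by the trace bound $\mathrm{tr}(\Lambda_h^K)\le d+K$ together with AM--GM, $\det(\Lambda_h^K)\le (1+K/d)^d$; hence the number of switches at level $h$ is at most $\log_2\det(\Lambda_h^K)=O(d\log K)$, and summing over $H$ levels gives the claimed $O(dH\log K)$ bound.

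\textbf{Regret.} The second part follows the template of \cite{jin2019provably}. I would first establish optimism for the \emph{fresh} estimate: because Line~\ref{lin:ls}--\ref{lin:estimate_end} recompute $\widetilde Q_h^k$ every episode using $\max_a\widetilde Q_{h+1}^k$, the standard self-normalized concentration argument applies verbatim and yields, with probability $1-p$ and the stated $\beta=cdH\sqrt\iota$, both the Bellman-error bound $\big|(\mathbf w_h^k)^\top\boldsymbol\phi(x,a)-(r_h+\mathbb P_h\widetilde V_{h+1}^k)(x,a)\big|\le \beta\,[\boldsymbol\phi^\top(\Lambda_h^k)^{-1}\boldsymbol\phi]^{1/2}$ and the optimism $\widetilde Q_h^k(x,a)\ge Q_h^*(x,a)$ for all $(x,a,h,k)$. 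Since $\tilde k\le k$ always, the executed estimate inherits optimism directly: $Q_h^k=\widetilde Q_h^{\tilde k}\ge Q_h^*$. The crucial new step is that the action at episode $k$ is greedy with respect to the stale $\widetilde Q_h^{\tilde k}$, so the per-step overestimate $\widetilde Q_h^{\tilde k}(x_h^k,a_h^k)-Q_h^{\pi_k}(x_h^k,a_h^k)$ decomposes, as usual, into a Bellman-error term, a recursive term $\mathbb P_h(\widetilde V_{h+1}^{\tilde k}-V_{h+1}^{\pi_k})$, and a martingale difference. The Bellman-error term is controlled by the concentration bound \emph{at episode $\tilde k$}, i.e.\ by $2\beta\,[\boldsymbol\phi(x_h^k,a_h^k)^\top(\Lambda_h^{\tilde k})^{-1}\boldsymbol\phi(x_h^k,a_h^k)]^{1/2}$. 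Here I would invoke the guarantee enforced by Lines~\ref{lin:criteria}--\ref{lin:change_reference}: after those lines the active reference satisfies $(\Lambda_h^{\tilde k})^{-1}\preccurlyeq 2(\Lambda_h^k)^{-1}$, so the stale bonus is at most $\sqrt2$ times the fresh bonus, $\boldsymbol\phi^\top(\Lambda_h^{\tilde k})^{-1}\boldsymbol\phi\le 2\,\boldsymbol\phi^\top(\Lambda_h^k)^{-1}\boldsymbol\phi$. Consequently the staleness costs only a constant factor, and unrolling the recursion over $h$ and summing over $k$ bounds the regret by $O\!\big(\sum_{k,h}\beta\,[\boldsymbol\phi(x_h^k,a_h^k)^\top(\Lambda_h^k)^{-1}\boldsymbol\phi(x_h^k,a_h^k)]^{1/2}\big)$ plus an $O(\sqrt{H^3K\iota})$ martingale term via Azuma--Hoeffding. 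Applying Cauchy--Schwarz and the elliptical potential lemma, $\sum_{k}\min\{1,\boldsymbol\phi^\top(\Lambda_h^k)^{-1}\boldsymbol\phi\}\le 2d\log(1+K/(\lambda d))$, bounds the bonus sum at each level by $O(\sqrt{dK\iota})$; multiplying by $H$ and by $\beta=cdH\sqrt\iota$ yields the $O(\sqrt{d^3H^4K\iota^2})$ regret.

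\textbf{Main obstacle.} The determinant-doubling step is clean once one notices that $\Lambda_h^k\succeq\Lambda_h^{\tilde k}$ pins all eigenvalues of the ratio matrix at $\ge 1$, so that a single doubled direction already forces the product of eigenvalues to double. The genuinely delicate part is the regret analysis under lazy updates: I must simultaneously preserve optimism for the executed policy (which works because $\widetilde Q$ is recomputed fresh and $\tilde k\le k$) and ensure that replacing $(\Lambda_h^k)^{-1}$ by the reference $(\Lambda_h^{\tilde k})^{-1}$ in the Bellman-error term loses only a constant factor, so that the elliptical potential lemma can still be applied to the \emph{current} features. The $\sqrt2$-spectral comparison guaranteed by the switching criterion is exactly what reconciles the stale estimate used for acting with the current covariance used for the potential argument, and verifying that this comparison holds for the precise $\tilde k$ active during the action-selection phase of episode $k$ is the key consistency check.
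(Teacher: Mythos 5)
Your proposal is correct and follows essentially the same route as the paper: a log-determinant potential argument for the switching cost (each firing of the criterion at level $h$ at least doubles $\det\Lambda_h$, which is capped at $(1+K/d)^d$, giving $O(d\log K)$ switches per level and $O(dH\log K)$ overall) and the LSVI-UCB regret analysis whose single new ingredient is the observation that the update criterion guarantees $\boldsymbol{\phi}^{\top}(\Lambda_h^{\tilde k})^{-1}\boldsymbol{\phi}\le 2\,\boldsymbol{\phi}^{\top}(\Lambda_h^{k})^{-1}\boldsymbol{\phi}$, so the stale bonus is within a constant of the fresh one and the elliptical potential lemma still applies. Your derivation of the doubling step---diagonalizing $N=(\Lambda_h^{k})^{1/2}(\Lambda_h^{\tilde k})^{-1}(\Lambda_h^{k})^{1/2}\succeq I$ and noting that one eigenvalue exceeding $2$ while the rest stay at least $1$ forces $\det N>2$---is a cleaner proof of the same fact than the paper's Woodbury-plus-SVD computation in Lemma~\ref{detlemma}.
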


Theorem~\ref{thm:main} suggests our algorithm achieves the desired regret and switching cost guarantees.
In terms of the regret, our bound matches the one in \cite{jin2019provably}, but our algorithm has significantly lower switching cost ($O\left(dH \log K\right)$ v.s.  $K$).
Recently, \cite{zanette2020learning} gave an $\widetilde{O}\left(\sqrt{d^2H^3K}\right)$ regret bound but their algorithm is not computationally efficient.
An interesting open problem is to design an algorithm which enjoys a regret bound of  $\widetilde{O}\left(\sqrt{d^2H^4 K}\right)$  and a switching cost bound like ours.
As will be seen in Section~\ref{sec:lb}, our switching cost bound is near-optimal up to logarithmic factors.
We provide a proof sketch in Section~\ref{sec:proof_sketch} and defer the full proof to appendix.


Recall tabular MDP is special case of linear MDP.
Using the observation that in the tabular setting, whenever Algorithm~\ref{algo:lowSwitchingAlgo} changes the policy, it only change one state-action pair, we obtain the following result for the local switching cost.
\begin{corollary}[Regret and Switching Cost of Algorithm~\ref{algo:lowSwitchingAlgo} for Tabular MDP]\label{cor:tabular}
In the tabular setting, there exists a constant $c > 0$ such that, for any $p\in (0,1)$ , if we set $\lambda = 1$ and $\beta = c dH\sqrt{\iota}$  with $\iota = \log (2SAKH/p)$ in Algorithm~\ref{algo:lowSwitchingAlgo} , then with probability $ 1 - p$, the total regret is at most $O(\sqrt{S^3A^3 H^4 K \iota^2})$. Furthermore, the local switching cost of the algorithm is bounded by $O( SAH\log K)$.
\end{corollary}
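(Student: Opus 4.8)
The plan is to treat Corollary~\ref{cor:tabular} as a direct specialization of Theorem~\ref{thm:main}, exploiting the fact that the tabular model embeds into the linear model with $d = SA$ through the canonical features $\phi(s,a) = e_{(s,a)}$. First I would record what each quantity the algorithm manipulates means under this embedding: the covariance matrix $\Lambda_h^k = \sum_{\tau < k} e_{(x_h^\tau, a_h^\tau)} e_{(x_h^\tau, a_h^\tau)}^\top + \lambda \mathbf{I}$ is \emph{diagonal}, with the $(s,a)$-entry equal to $n_h^k(s,a) + \lambda$, where $n_h^k(s,a)$ counts the visits to $(s,a)$ at level $h$ before episode $k$. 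The regret bound is then immediate: substituting $d = SA$ into the $O(\sqrt{d^3 H^4 K \iota^2})$ guarantee of Theorem~\ref{thm:main} yields $O(\sqrt{S^3 A^3 H^4 K \iota^2})$, and the accompanying logarithmic factor becomes $\iota = \log(2 SAKH/p)$, exactly as stated.

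The substance of the corollary is the \emph{local} switching bound, which is sharper than the $\abs{\states} H$-loss one would incur by naively combining the global bound with the generic inequality $N_{\text{switch}}^{\text{loc}} \le \abs{\states} H \, N_{\text{switch}}^{\text{gl}}$. The key is that in the diagonal case the policy-update criterion decouples across coordinates: $(\Lambda_h^{\tilde k})^{-1} \not\preccurlyeq 2 (\Lambda_h^k)^{-1}$ holds if and only if there is a single pair $(s,a)$ with $n_h^k(s,a) + \lambda > 2\bigl(n_h^{\tilde k}(s,a) + \lambda\bigr)$. Since at each level exactly one count increments per episode, the criterion can only become newly satisfied at the episode in which the offending coordinate's count crosses its doubling threshold; hence every firing is attributable to one specific state-action pair at one level. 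I would then bound the total number of firings: a fixed coordinate's count grows from $0$ to at most $K$ over the whole run, so it can double only $O(\log K)$ times, and summing over the $SA$ coordinates and $H$ levels gives $O(SAH\log K)$ firings in total, which is also what the determinant argument behind Theorem~\ref{thm:main} gives with $d = SA$.

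The remaining step, and the one I expect to be the main obstacle, is to convert ``one coordinate fires'' into ``one local switch,'' i.e. to argue that a single firing alters the greedy policy $\arg\max_a Q_h^k(s,\cdot)$ at only one state. I would make this precise by tracking the reference update at level $h$ coordinate-by-coordinate: the executed $Q_h$ is frozen between firings, and when the reference advances because of the pair that crossed its threshold, I must show that only the entry of $\widetilde Q_h$ tied to that pair's state can flip its $\arg\max$. The delicate point is that the recomputed estimate $\widetilde Q_h^{\tilde k}$ depends on the downstream estimate $\widetilde Q_{h+1}^{\tilde k}$ as well as on the local counts, so I must verify that in the tabular embedding the update triggered by a single coordinate's doubling genuinely localizes to that coordinate's state; this is exactly the observation quoted before the corollary statement. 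Once this localization is established, each of the $O(SAH\log K)$ firings contributes at most one term to $N_{\text{switch}}^{\text{loc}} = \sum_{k} \abs{\{(h,x): \pi_k^h(x)\ne \pi_{k+1}^h(x)\}}$, yielding the claimed $O(SAH\log K)$ local switching cost.
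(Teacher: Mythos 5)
Your proposal follows the same route as the paper: the regret bound is obtained by substituting $d = SA$ into Theorem~\ref{thm:main}, and the local switching bound rests on the claim that in the tabular embedding each reference update alters the greedy policy at only one state, so that local switches can be charged one-to-one to the $O(SAH\log K)$ firings of the doubling criterion. The paper's entire justification for the corollary is the one sentence preceding its statement (``whenever Algorithm~\ref{algo:lowSwitchingAlgo} changes the policy, it only change one state-action pair''), so in strategy you are exactly aligned, and your explicit description of the diagonal structure of $\Lambda_h^k$ and the coordinatewise decoupling of the criterion is a welcome elaboration.

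The one substantive remark is that the ``delicate point'' you flag --- that the recomputed $\widetilde Q_h^{\tilde k}$ depends on all counts accumulated since the previous reference and on the downstream estimate $\widetilde Q_{h+1}^{\tilde k}$, so a single coordinate's doubling does not obviously localize the change in $\arg\max_a Q_h(s,\cdot)$ to one state --- is precisely the step the paper also asserts without proof. You defer to the paper's observation rather than closing the gap, which puts your proposal at the same level of rigor as the paper's own treatment: same approach, same unproven localization lemma. If you wanted to go beyond the paper, this localization claim is the one thing that would need an actual argument.
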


We present our corollary in terms of the local switching cost in order to have a fair comparison with the results in \cite{bai2019provably,zhang2020optimal}.
Recall the local switching cost is always an upper bound of the global switching cost, so our bound also holds for the global switching cost.
The best existing result is by \citet{zhang2020optimal} who designed an algorithm with $\widetilde{O}\left(\sqrt{SAH^3 K}\right)$ regret and $O\left(SAH^2 \log K\right)$ switching cost.
Comparing with \cite{zhang2020optimal}, our regret bound is larger but our switching cost is lower than theirs.


\section{Proof Sketch of Theorem~\ref{thm:main}}
\label{sec:proof_sketch}
The proof consists of two parts: bounding the regret and bounding the global switching cost. 
Note minimizing the regret and the switching cost are conflict to each other because a small switching cost requires us \emph{not} to use the most updated information which can incur higher regret.
The main technical novelty is that our criteria can achieve the same order regret as the one in \cite{jin2019provably} and at the same time reduce the switching cost significantly.

\subsection{Regret Analysis}
Due to the delayed policy update, the establishment of the bound for regret may be more difficult than the previous algorithm in \cite{jin2019provably}, yet the steps are very similar. 

We start our proof by decomposing the regret into the error induced by the estimation error from the delayed policy update.
To simplify the notation, for any $k \in [K]$, we let $\W k \le k$ represents the episode index we update the policy to the one used in the $k$-th episode.
We have the following decomposition.
\begin{align*}
    \text{Regret}(K) & \le  \sum_{k = 1}^{K} [\widetilde V_1^{\tilde{k}}(x_1^k) - V_1^{\pi_{\tilde{k}}}(x_1^k)] 
\end{align*}
By definition, this term represents the error from the estimation in episode $\W k$ in state $x_1^k$.

\paragraph{Analysis of Error due to the Delayed Policy Update}
First, as will be seen in the appendix, we can obtain a recursive formula such that it is sufficient to bound the term 
$\left(\W {Q}_{h}^{\tilde{k}}-Q_{h}^{\pi_{\tilde{k}}}\right)\left(x_{h}^{k}, a_{h}^{k}\right)$.
With some error analysis, we can bound it by 

\[
\left(\W {Q}_{h}^{\tilde{k}}-Q_{h}^{\pi_{\tilde{k}}}\right)\left(x_{h}^{k}, a_{h}^{k}\right)\le \Delta_{h}^{\tilde{k}}(x, a)+\W {\delta}_{h}^{k}
\]

Here $\W {\delta}_{h}^{k}$ is a zero-mean martingale difference sequence, so we can use standard concentration inequalities to bound it.
$\Delta_{h}^{k}(x, a) $ represents the bonus term that satisfies \[\abs{\Delta_{h}^{k}(x, a)} \le \beta \sqrt{\phi(x, a)^{\top}\left(\Lambda_{h}^{k}\right)^{-1} \phi(x, a)}.\]
By our policy update criteria, we have the following simple yet crucial property: 
\[
\phi^\top (\Lambda_h^{\W k})^{-1} \phi \le 2 \phi^\top (\Lambda_h^{k})^{-1} \phi .\]
Therefore, although the error due to the delayed update has additional terms, these terms are at most of the same order as the error occurring during the estimation phase.
Using this observation, we can essentially reuse the proof for bounding the error due the estimation here.

\subsection{Switching Cost Analysis}

The analysis of switching cost is trickier.
We employ a potential based analysis.
The potential function is the logarithm of the determinant of the empirical covariance matrix.
The following lemma shows it is upper bounded by $O\left(d \log K\right)$.

\begin{lemma}\label{detlemma_2}
	Let $\phi_{\tau}$ are $d$-dimensional vectors satisfying $\norm{\phi_{\tau}} \le 1$. Let $A = \sum_{\tau = 1}^{K}\phi_{\tau}(\phi_{\tau})^{\top} + \lambda \cdot \mathrm{I}.$ Then we have \[\log \det A = O(d\log K).\]
\end{lemma}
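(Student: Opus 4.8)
The plan is to bound the determinant by the trace via the AM--GM inequality and then control the trace directly; this is the standard volumetric (elliptical potential) argument. First I would observe that since $\lambda > 0$, the matrix $A$ is symmetric positive definite, so it has $d$ strictly positive eigenvalues $\lambda_1,\ldots,\lambda_d$ with $\det A = \prod_{i=1}^d \lambda_i$ and $\mathrm{tr}(A) = \sum_{i=1}^d \lambda_i$. Applying the AM--GM inequality to these eigenvalues gives
\[
\det A = \prod_{i=1}^d \lambda_i \le \left(\frac{1}{d}\sum_{i=1}^d \lambda_i\right)^d = \left(\frac{\mathrm{tr}(A)}{d}\right)^d.
\]

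Next I would bound the trace of $A$. By linearity of the trace together with the cyclic identity $\mathrm{tr}\left(\phi_\tau \phi_\tau^\top\right) = \phi_\tau^\top \phi_\tau = \norm{\phi_\tau}^2 \le 1$, we obtain
\[
\mathrm{tr}(A) = \sum_{\tau=1}^K \norm{\phi_\tau}^2 + \lambda\, \mathrm{tr}(\mathrm{I}) \le K + \lambda d.
\]
Combining this with the previous display and taking logarithms yields
\[
\log \det A \le d \log\!\left(\frac{K + \lambda d}{d}\right) = d \log\!\left(\frac{K}{d} + \lambda\right),
\]
which is $O(d\log K)$ once $\lambda$ is treated as a fixed constant (indeed in Theorem~\ref{thm:main} we take $\lambda = 1$).

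I do not anticipate a genuine obstacle here, as the statement is a clean consequence of AM--GM and a trace computation; the only points requiring care are ensuring $A \succ 0$ so that all eigenvalues are positive and AM--GM applies (guaranteed by $\lambda > 0$), and keeping track of the dependence on $\lambda$ and $d$ inside the logarithm so that the final bound is genuinely logarithmic in $K$. The same estimate is exactly what drives the switching-cost analysis: each policy update is triggered by a constant-factor change in the inverse covariance, so the potential $\log\det \Lambda_h^k$ can increase by at least a constant per switch at each level $h$, and this lemma caps the total increase at $O(d\log K)$ per level, giving the claimed $O(dH\log K)$ global switching cost.
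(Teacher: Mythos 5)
Your proof is correct, but it takes a genuinely different route from the paper's. You argue spectrally: since $\lambda>0$ makes $A$ positive definite, AM--GM on the eigenvalues gives $\det A \le (\mathrm{tr}(A)/d)^d$, and the trace is controlled by $\mathrm{tr}(A)\le K+\lambda d$ because each rank-one term contributes $\norm{\phi_\tau}^2\le 1$; this yields $\log\det A\le d\log\left(K/d+\lambda\right)$. The paper instead works entrywise: it notes that every entry of $\phi_\tau\phi_\tau^\top$ has absolute value at most $1$, hence every entry of $A$ is at most $K+\lambda$ in absolute value, and then runs an induction via Laplace expansion along the first row to get $|\det A|\le d^d(K+\lambda)^d$, i.e.\ $\log\det A\le d\log d+d\log(K+\lambda)$. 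Both arguments are elementary and both suffice for the $O(d\log K)$ claim used in the switching-cost analysis, but yours is cleaner and slightly tighter (the $d\log(K/d+\lambda)$ bound improves on the paper's additive $d\log d$ term, which matters only when $K$ is not much larger than $d$), and it sidesteps the somewhat informally stated induction over minors in the paper's version. Your closing remark about how the lemma feeds into the potential argument for the switching cost matches the paper's usage exactly.
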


Now we consider our update rule.
Recall we update our policy only if $(\Lambda_h^k)^{-1}  \not \preccurlyeq 2 (\Lambda_h^{\tilde{k}})^{-1}$.
The following lemma shows whenever this condition holds, the potential function must increase by a constant.

%

\begin{lemma}\label{detlemma}
	Assume $m\le n$, $A = \sum_{\tau = 1}^{m} \phi_{\tau}\phi_{\tau}^{\top} + \lambda \cdot \mathrm{I}$, $B = \sum_{\tau = 1}^{n} \phi_{\tau}\phi_{\tau}^{\top} + \lambda \cdot \mathrm{I}$. Then if $A^{-1} \not \preccurlyeq  2B^{-1}$, we have \[\log \det B \ge \log \det A + \log 2\]
\end{lemma}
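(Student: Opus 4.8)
The plan is to convert the order condition $A^{-1}\not\preccurlyeq 2B^{-1}$ into a statement about a single eigenvalue and then read off the determinant ratio. First I would record the structural fact that, since $m\le n$ and each $\phi_\tau\phi_\tau^\top\succcurlyeq 0$, we have
\[
B = A + \sum_{\tau=m+1}^{n}\phi_\tau\phi_\tau^\top \succcurlyeq A \succ 0,
\]
where positive definiteness of both matrices comes from the $\lambda\mathbf{I}$ term. Hence $A^{1/2}$ and $A^{-1/2}$ are well defined, and I would introduce the congruence $M \triangleq A^{-1/2}BA^{-1/2}$. Because $B\succcurlyeq A$, congruence by $A^{-1/2}$ gives $M\succcurlyeq \mathbf{I}$, so every eigenvalue of $M$ is at least $1$; moreover $\det M = \det B/\det A$.

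The crux is that congruence by the invertible matrix $A^{1/2}$ preserves (and reflects) the positive semidefinite order. Applying it to $2B^{-1}-A^{-1}$ and using $A^{1/2}B^{-1}A^{1/2}=(A^{-1/2}BA^{-1/2})^{-1}=M^{-1}$ yields
\[
A^{1/2}\left(2B^{-1}-A^{-1}\right)A^{1/2} = 2A^{1/2}B^{-1}A^{1/2}-\mathbf{I} = 2M^{-1}-\mathbf{I}.
\]
Therefore the hypothesis $A^{-1}\not\preccurlyeq 2B^{-1}$, i.e.\ $2B^{-1}-A^{-1}\not\succcurlyeq 0$, is equivalent to $2M^{-1}-\mathbf{I}\not\succcurlyeq 0$. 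This says $M^{-1}$ has an eigenvalue strictly below $1/2$, equivalently $M$ has a largest eigenvalue $\lambda_{\max}(M) > 2$.

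To conclude, I would combine the two eigenvalue facts: all eigenvalues of $M$ are $\ge 1$ and at least one exceeds $2$, so
\[
\det M = \prod_{i}\lambda_i(M) \ge \lambda_{\max}(M)\cdot 1^{\,d-1} > 2.
\]
Since $\det M = \det B/\det A$, taking logarithms gives $\log\det B > \log\det A + \log 2$, which is stronger than the claimed inequality. The computations are elementary; the only genuine idea is the congruence reduction, which lets me read the $\not\preccurlyeq$ condition off a single eigenvalue of $M$ while simultaneously controlling the determinant. The step to be careful about is precisely this congruence argument: I must justify that conjugation by the invertible $A^{1/2}$ preserves the PSD order in both directions, so that the ``not PSD'' hypothesis transfers cleanly to $2M^{-1}-\mathbf{I}$.
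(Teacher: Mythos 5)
Your proof is correct, and it takes a cleaner route than the paper's. The paper writes $\Delta = B-A$, invokes the Woodbury identity to express $A^{-1}-2B^{-1}$ in terms of $\Delta^{1/2}(I+\Delta^{1/2}A^{-1}\Delta^{1/2})^{-1}\Delta^{1/2}$, extracts $\lambda_{\max}\bigl(I+\Delta^{1/2}A^{-1}\Delta^{1/2}\bigr)\ge 2$ via an SVD of $A^{-1/2}\Delta^{1/2}$, and then applies the matrix determinant lemma $\det(A+\Delta)=\det\bigl(I+\Delta^{1/2}A^{-1}\Delta^{1/2}\bigr)\det(A)$. You instead conjugate everything once by $A^{1/2}$, reducing the whole statement to elementary facts about the single matrix $M=A^{-1/2}BA^{-1/2}$: its determinant is $\det B/\det A$, all its eigenvalues are at least $1$ because $B\succcurlyeq A$, and the hypothesis $A^{-1}\not\preccurlyeq 2B^{-1}$ transfers under congruence to $2M^{-1}-\mathbf{I}\not\succcurlyeq 0$, i.e.\ $\lambda_{\max}(M)>2$. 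The two arguments are close cousins---the paper's matrix $I+\Delta^{1/2}A^{-1}\Delta^{1/2}$ has the same spectrum as your $M$---but your version dispenses with both the Woodbury identity and the SVD computation, and it handles the negation of $\preccurlyeq$ more carefully: the paper's proof takes a vector with $x^{\top}(A^{-1}-2B^{-1})x\ge 0$ where the correct negation gives strict inequality, whereas you correctly obtain $\lambda_{\max}(M)>2$ and hence the slightly stronger strict conclusion $\log\det B>\log\det A+\log 2$. The only implicit step worth stating is that the eigenvalues of $M^{-1}$ are the reciprocals of those of $M$ (true since $M$ is symmetric positive definite), which is how ``some eigenvalue of $M^{-1}$ below $1/2$'' becomes ``some eigenvalue of $M$ above $2$.''
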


To bound the switching cost, we note the potential is upper bounded by $O\left(d \log K\right)$ and every time we update the policy, the potential must increase by $\log 2$, so in total we at most update the policy $O\left(d \log K\right)$ times.
We believe our proof strategy may be useful in other problems as well.


\section{Lower Bound}
\label{sec:lb}
To  complement our upper bound on linear MDP, we present the following lower bound on the global  switching cost.

\begin{theorem}\label{thm:lowerBoundThm}
  For $d \ge 100$,  let $\mathcal{M}$ be  the class of linear MDPs defined in Section~\ref{LMDPassumption}.
For any algorithm that uses a deterministic policy at each episode, if its global switching cost $N_{\rm switch}^{\rm gl} \le \frac{dH}{100 \log d}$ , we have
\[\sup_{M \in \mathcal{M}} \mathbb{E}_{s_1, \mathcal{M}}{\left[ \sum_{k = 1}^{K} V_1^*(s_1) - V_1^{\pi_k}(s_1) \right ]}  \ge KH/4. \]
\end{theorem}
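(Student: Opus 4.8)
The plan is to prove the contrapositive by an information-theoretic reduction: I will construct a distribution (a prior) over a carefully chosen subfamily of $\mathcal{M}$ on which \emph{every} algorithm that switches at most $N \le \frac{dH}{100\log d}$ times suffers expected regret at least $KH/4$, and then pass from the prior to the supremum via $\sup_{M\in\mathcal{M}}\mathbb{E}\ge \mathbb{E}_{\pi^\star}\mathbb{E}$. Since the theorem already restricts to deterministic policies, by Yao's minimax principle it suffices to handle deterministic algorithms: I fix the algorithm's internal randomness and average over the prior. The central mechanism is a \emph{hidden-needle} construction: I index the instances by a hidden deterministic policy $\pi^\star$ drawn uniformly from a set $\Pi$ of candidate policies, and engineer the linear MDP $M_{\pi^\star}$ so that (i) $V_1^\star = H$ is attained uniquely by $\pi^\star$, (ii) every $\pi\neq\pi^\star$ has $V_1^{\pi}\le H/2$, so playing any non-needle policy incurs per-episode regret at least $H/2$, and (iii) the law of the observed trajectory when the agent plays a policy $\pi\neq\pi^\star$ is (essentially) independent of which needle is hidden, so that a non-needle episode conveys almost no information beyond ruling out $\pi$ itself.

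Granting such a family, the combinatorial core is short. A run with global switching cost $N$ uses at most $N+1$ distinct deterministic policies; call this (random) set $D$, with $|D|\le N+1$. By the uninformativeness property (iii), conditioned on $\pi^\star$ never having been played, the algorithm's observations up to any point are independent of $\pi^\star$, so $\pi^\star$ remains uniform on $\Pi$ minus the already-played policies. Hence the probability that $\pi^\star$ ever coincides with one of the $\le N+1$ policies in $D$ is at most $(N+1)/|\Pi|$. If the construction guarantees $|\Pi|\ge 2(N+1)$, then with probability at least $1/2$ the needle is never played, so all $K$ episodes use a policy $\pi\neq\pi^\star$ and each incurs regret at least $H/2$. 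Taking expectations gives expected regret at least $\tfrac12\cdot K\cdot\tfrac{H}{2}=KH/4$, and therefore $\sup_{M}\mathbb{E}[\cdot]\ge KH/4$, as claimed.

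The main obstacle, and the only place the bound $\frac{dH}{100\log d}$ (rather than a larger quantity) and the factor $\log d$ enter, is realizing such a family as a bona fide linear MDP while packing $|\Pi|\ge 2(N+1)\approx \frac{dH}{50\log d}$ mutually confusable needles under the norm constraints $\|\phi\|\le 1$, $\|\mu_h\|\le\sqrt d$, $\|\theta_h\|\le\sqrt d$. My plan is a per-level packing: use the $H$ levels to place the hidden needle at one unknown level, and at each level encode $\approx d/\log d$ candidate needle-directions in $\mathbb{R}^d$ whose reward functions are indicator-like with gap bounded below by a constant, so that distinguishing two non-optimal candidates from observations is impossible. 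The $\log d$ loss arises precisely here: packing directions that are simultaneously (a) separated enough to force a value gap $\ge H/2$, (b) within the linear/norm budget, and (c) confusable on non-needle plays, yields only $\Theta(d/\log d)$ usable directions per level rather than the naive $\Theta(d)$. Summing the single-active-level contributions over the $H$ levels gives $|\Pi|=\Theta(dH/\log d)$, which is exactly the threshold for $N$.

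I expect the technically delicate step to be establishing property (iii) rigorously for an adaptive learner: the agent observes full trajectories, and I must verify that deviating from $\pi^\star$ truly leaks at most a negligible (or at most $O(\log d)$-bit) amount of information about the hidden needle, so that the ``posterior stays uniform'' argument is valid. This will require designing the transition measures $\mu_h$ so that non-needle trajectories terminate into observation-identical configurations, and then formalizing the conditional-independence claim—most cleanly by a direct coupling of trajectory distributions across instances sharing the same set of already-eliminated policies, rather than by a generic mutual-information bound. The value-gap claims (i)--(ii) and the linear-MDP verification are then routine computations with the chosen $\phi,\mu_h,\theta_h$, and the final probabilistic step is the elementary counting argument above.
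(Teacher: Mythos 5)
Your high-level strategy is the same as the paper's: a combination-lock / needle-in-a-haystack family of linear MDPs, a prior over hidden secrets, and a counting argument showing that $N$ switches permit at most $N+1$ distinct deterministic policies and hence at most that many ``eliminations,'' so with a large enough candidate set the needle is missed with constant probability and each episode pays $\Omega(H)$ regret. The elementary probabilistic endgame ($\tfrac12\cdot K\cdot\tfrac{H}{2}=KH/4$) matches the paper's accounting up to where the factor of $H/2$ versus $H/4$ is placed. However, there is a genuine gap: the entire technical content of this theorem is the explicit construction of $\phi,\mu_h,\theta_h$ satisfying your properties (i)--(iii) within the linear-MDP norm budget, and you defer exactly that. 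You yourself flag property (iii) as the delicate step and then assert the rest is ``routine,'' but nothing is exhibited. Worse, the specific form you postulate --- a uniform needle over a set $\Pi$ of $\Theta(dH/\log d)$ whole policies such that every non-needle episode reveals \emph{only} that the played policy is not the needle, leaving the posterior uniform on the remainder --- is strictly stronger than what the paper's construction achieves and is not obviously realizable. In the paper's instance the secret is a path $(i_1,\dots,i_{h_*})$ out of an exponentially large set, and a non-needle episode \emph{does} leak information (the longest correct prefix plus one wrong candidate at the first deviating level); the lower bound survives because the paper does a per-level accounting ($r=\sum_\tau r_\tau\le N_{\rm switch}^k+H_0+1$) of how many candidates can be probed at each level, not because leakage is zero. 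Your ``posterior stays uniform on $\Pi\setminus D$'' claim would need the candidates to be pairwise confusable even under adaptive play of arbitrary (possibly non-candidate) policies, and you give no construction certifying this.

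A second, smaller discrepancy: your explanation of the $\log d$ factor (a packing loss when embedding $\Theta(d/\log d)$ confusable directions per level) does not match the paper and is speculative. In the paper the raw construction already supports $\Omega(dH)$ candidates (this is stated explicitly in the closing remark), and the $\log d$ in the denominator arises from an auxiliary reduction replacing each $d_0$-ary action choice by a depth-$\log d_0$ binary tree of states. Since the theorem only claims $dH/(100\log d)$, losing a $\log d$ for a different reason would not be fatal \emph{if} your packing argument were carried out, but as written it is an assertion about a construction that does not exist yet. To complete your proof you would need to either (a) exhibit the family and prove the coupling/conditional-independence claim for adaptive learners, or (b) retreat to the paper's weaker but verifiable property (prefix revelation) and redo the counting per level rather than per whole policy.
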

Theorem~\ref{thm:lowerBoundThm} states that for any algorithm that achieves sub-linear regret, it must have at least $\Omega\left(dH/\log d \right)$ global switching.
This shows our upper bound on the global switching cost cannot be improved up to logarithmic factors.
One interesting open problem is to further close this gap.
We remark that \citet{bai2019provably} derived an $\Omega\left(SAH\right)$ local switching cost, which only implies an $\Omega\left(A\right)$ global switching cost.
The simple multi-armed bandit problem also has an $\Omega\left(A\right)$ 
global switching cost lower bound.
Theorem~\ref{thm:lowerBoundThm} is, to our knowledge, the first non-trivial global switching cost lower bound in RL.

\subsection{Proof Sketch of Theorem~\ref{thm:lowerBoundThm}}
The full proof is deferred to the appendix, and here we give an outline of the proof.
The  strategy is to construct a class of hard MDPs and show for any algorithm without any prior knowledge about this class, it must suffer enough regret and switching cost.
The difficult part is how to construct hard instances.

%
%

\begin{figure}

\centering

\tikzset{every picture/.style={line width=0.75pt}} 

\begin{tikzpicture}[x=0.75pt,y=0.75pt,yscale=-0.8,xscale=0.8]

\draw  [color={rgb, 255:red, 0; green, 0; blue, 0 }  ,draw opacity=1 ][dash pattern={on 4.5pt off 4.5pt}] (100,145) .. controls (100,131.19) and (111.19,120) .. (125,120) .. controls (138.81,120) and (150,131.19) .. (150,145) .. controls (150,158.81) and (138.81,170) .. (125,170) .. controls (111.19,170) and (100,158.81) .. (100,145) -- cycle ;
\draw  [color={rgb, 255:red, 0; green, 0; blue, 0 }  ,draw opacity=1 ] (250,145) .. controls (250,131.19) and (261.19,120) .. (275,120) .. controls (288.81,120) and (300,131.19) .. (300,145) .. controls (300,158.81) and (288.81,170) .. (275,170) .. controls (261.19,170) and (250,158.81) .. (250,145) -- cycle ;
\draw  [color={rgb, 255:red, 0; green, 0; blue, 0 }  ,draw opacity=0.43 ] (400,145) .. controls (400,131.19) and (411.19,120) .. (425,120) .. controls (438.81,120) and (450,131.19) .. (450,145) .. controls (450,158.81) and (438.81,170) .. (425,170) .. controls (411.19,170) and (400,158.81) .. (400,145) -- cycle ;
\draw [color={rgb, 255:red, 0; green, 0; blue, 0 }  ,draw opacity=1 ]   (157.5,144.25) -- (243.5,144.25) ;
\draw [shift={(245.5,144.25)}, rotate = 180] [color={rgb, 255:red, 0; green, 0; blue, 0 }  ,draw opacity=1 ][line width=0.75]    (10.93,-3.29) .. controls (6.95,-1.4) and (3.31,-0.3) .. (0,0) .. controls (3.31,0.3) and (6.95,1.4) .. (10.93,3.29)   ;
\draw [color={rgb, 255:red, 0; green, 0; blue, 0 }  ,draw opacity=1 ]   (107,122.75) .. controls (78.78,85.62) and (174.06,86.23) .. (143.47,122.15) ;
\draw [shift={(142.5,123.25)}, rotate = 312.15999999999997] [color={rgb, 255:red, 0; green, 0; blue, 0 }  ,draw opacity=1 ][line width=0.75]    (10.93,-3.29) .. controls (6.95,-1.4) and (3.31,-0.3) .. (0,0) .. controls (3.31,0.3) and (6.95,1.4) .. (10.93,3.29)   ;
\draw    (126,174.25) .. controls (163.62,210.88) and (382.56,207.33) .. (425.75,178.14) ;
\draw [shift={(427,177.25)}, rotate = 503.13] [color={rgb, 255:red, 0; green, 0; blue, 0 }  ][line width=0.75]    (10.93,-3.29) .. controls (6.95,-1.4) and (3.31,-0.3) .. (0,0) .. controls (3.31,0.3) and (6.95,1.4) .. (10.93,3.29)   ;
\draw [color={rgb, 255:red, 0; green, 0; blue, 0 }  ,draw opacity=1 ]   (256,120.75) .. controls (227.78,83.62) and (323.06,84.23) .. (292.47,120.15) ;
\draw [shift={(291.5,121.25)}, rotate = 312.15999999999997] [color={rgb, 255:red, 0; green, 0; blue, 0 }  ,draw opacity=1 ][line width=0.75]    (10.93,-3.29) .. controls (6.95,-1.4) and (3.31,-0.3) .. (0,0) .. controls (3.31,0.3) and (6.95,1.4) .. (10.93,3.29)   ;
\draw [color={rgb, 255:red, 0; green, 0; blue, 0 }  ,draw opacity=1 ]   (407,120.75) .. controls (378.79,83.62) and (474.06,84.23) .. (443.47,120.15) ;
\draw [shift={(442.5,121.25)}, rotate = 312.15999999999997] [color={rgb, 255:red, 0; green, 0; blue, 0 }  ,draw opacity=1 ][line width=0.75]    (10.93,-3.29) .. controls (6.95,-1.4) and (3.31,-0.3) .. (0,0) .. controls (3.31,0.3) and (6.95,1.4) .. (10.93,3.29)   ;

\draw (136,145) node   [align=left] {\begin{minipage}[lt]{24.48pt}\setlength\topsep{0pt}
{\Large $\displaystyle \textcolor[rgb]{0,0,0}{u}$}
\end{minipage}};
\draw (282,145) node   [align=left] {\begin{minipage}[lt]{18.360000000000003pt}\setlength\topsep{0pt}
{\Large $\displaystyle \textcolor[rgb]{0,0,0}{v}$}
\end{minipage}};
\draw (430,145) node   [align=left] {\begin{minipage}[lt]{18.360000000000003pt}\setlength\topsep{0pt}
{\Large $\displaystyle w$}
\end{minipage}};

\end{tikzpicture}
\caption{An illustration of the hard instance used for proving Theorem~\ref{thm:lowerBoundThm}.}
\label{fig:comblock}
\end{figure}
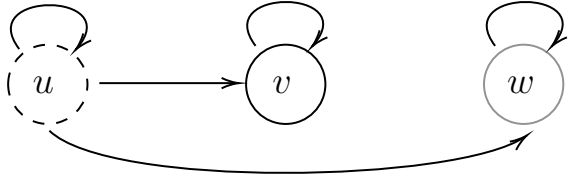

We consider environments similar to combination lock~\citep{kakade2003sample}.
Figure \ref{fig:comblock}  shows a simplified version of our constructed environment. The agent starts at state $u$, and $v, w$ are other two states. The reward at $v$ is always $1$ while the reward at other states is $0$. In order to go to state $v$, the agent needs to select a sequence of correct actions. 
In each episode, the agent stays at $u$ if the previous action is correct and even if only one action is incorrect, the agent will go to state $w$ at and stay there till the episode ends.  

We further construct states and features to encode this problem as a linear MDP.
To ensure the transition and the reward is linear we also need to adjust states and action carefully for which some auxiliary states are needed as well.
The $\log d$ factor in the denominator comes from our modifications of the environment described above.

\section{Conclusion and Future Works}
\label{sec:con}
In the view of the switching cost, we study the reinforcement learning algorithms in the linear Markov decision process setting.  Based on the current polynomial switching-cost algorithm with small regret bound, we design a new algorithm that matches its regret bound, with significantly lower global switching cost.  The regret bound of our algorithm is regret $\Tilde{O}(\sqrt{d^3H^4K})$, with the global switching cost being $O(dH\log K)$. 
This bound also implies a switching cost improvement over existing results of tabular MDP. 
Furthermore, by constructing a series of hard MDP instances, we are able to prove the lower bound for the switching cost is $\Omega(dH/\log d)$ provided that the deterministic algorithm has a sub-linear regret. 
We now list some future directions.

\paragraph{Towards Optimal Switching Cost Bound}
Currently, there is a $\log K \log d$ factor gap between our upper bound and lower bound.
In particular, we believe the upper bound can be further improved to $\log \log K$, as in the bandit setting, this is achievable~\cite{cesa2013online}.
For the lower bound, we believe the $\log d$ factor is removable though we found this is a technically challenging problem
We believe obtaining the optimal switching cost bound will greatly broaden our understanding on this problem.

\paragraph{Optimal Regret Bound with Switching Cost}
\citet{zanette2020learning} recently showed for linear MDP, it is possible to obtain an $\widetilde{O}\left(\sqrt{d^2H^3K}\right)$, which is optimal up to logarithmic factors.
Their algorithm is substantially different from the one by \cite{jin2019provably} and it is not computationally efficient.
It is possible to combine their analysis and ours to obtain an algorithm which has near-optimal regret and at the same time, has low switching cost.
A more interesting problem is to make this algorithm computationally efficient.

\paragraph{Low Switching Cost Algorithm for RL with General Function Approximation}
Recently, there are many works trying to design provably efficient algorithms  with general function approximation, beyond the linear function approximation scheme.
These works are based on different assumptions~\citep{wen2013efficient,jiang2017contextual,sun2018model,wang2020provably,ayoub2020model}.
It would be interesting to extend our analysis to these settings.

This work does not present any foreseeable negative societal consequence. From the positive side, the algorithm proposed in this paper can be potentially applied in medical domain, and hence benefit the society.
\bibliography{ref}
\bibliographystyle{plainnat}

\newpage

\section{Appendix}
\section{Upper Bound Proof}

\subsection{Basic properties of the LSVI algorithm}


In this part, we list some of the important lemmas for the LSVI algorithm, most of which are proven in the previous literature \cite{jin2019provably}. These lemmas are very useful for proving the regret bound in our main theorem.

\begin{lemma}{\rm (Lemma B.3, \cite{jin2019provably}).} \label{B.3}
Define $\left[\mathbb{P}_{h} \W V_{h+1}\right](x, a):=\mathbb{E}_{x^{\prime} \sim \mathbb{P}_{h}(\cdot | x, a)} \W V_{h+1}\left(x^{\prime}\right).$ Under the setting of Theorem \ref{thm:main}, let $c_{\beta}$ be the constant in our definition of $\beta$ (i.e., $\beta=$ $\left.c_{\beta} \cdot d H \sqrt{\iota}\right) .$ There exists an absolute constant $C$ that is independent of $c_{\beta}$ such that for any fixed $p \in[0,1]$ if we let $\mathfrak{E}$ be the event that:
\begin{align*}
&\forall(k, h) \in[K] \times[H]:\\
&\left| \left| \sum_{\tau=1}^{k-1} \phi_{h}^{\tau}\left[\W V_{h+1}^{k}\left(x_{h+1}^{\tau}\right)-\mathbb{P}_{h} \W V_{h+1}^{k}\left(x_{h}^{\tau}, a_{h}^{\tau}\right)\right] \right| \right| _{\left(\Lambda_{h}^{k}\right)^{-1}} \\
&\leq C \cdot d H \sqrt{\chi}
\end{align*}
where $\chi=\log \left[2\left(c_{\beta}+1\right) d T / p\right],$ then $\mathbb{P}(\mathfrak{E}) \geq 1-p / 2.$
\end{lemma}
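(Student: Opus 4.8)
The plan is to reduce the statement to a self-normalized tail bound for a vector-valued martingale, with the essential complication being that $\W V_{h+1}^k$ is itself a function of the data through episode $k-1$, so the summands are \emph{not} a martingale difference sequence adapted to the natural filtration. First I would fix the level $h$ and treat the randomness of the transition into level $h+1$. For any \emph{deterministic} function $V:\states\to[0,H]$, the sequence $\eta_\tau := V(x_{h+1}^\tau) - [\mathbb{P}_h V](x_h^\tau,a_h^\tau)$ is a martingale difference sequence with respect to the filtration generated by the history up to and including $x_{h+1}^\tau$, and each $\eta_\tau$ is $H$-sub-Gaussian since $|\eta_\tau|\le H$. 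Applying the self-normalized concentration inequality for vector-valued martingales (the Lemma~D.4 analogue used in \cite{jin2019provably}) to $\sum_\tau \phi_h^\tau \eta_\tau$ yields, with probability at least $1-\delta$ simultaneously over all $k$,
\[
\Big\| \sum_{\tau=1}^{k-1} \phi_h^\tau \eta_\tau \Big\|_{(\Lambda_h^k)^{-1}}^2 \le C' H^2 \Big[ \tfrac{d}{2}\log\tfrac{k+\lambda}{\lambda} + \log\tfrac{1}{\delta} \Big].
\]

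The difficulty is that the actual $\W V_{h+1}^k$ depends on the data, so I cannot plug it into the bound above. The standard remedy is a uniform covering argument. I would introduce the function class
\[
\mathcal{V} = \Big\{ x \mapsto \min\big\{ \max_a [w^\top \phi(x,a) + \beta\sqrt{\phi(x,a)^\top A\, \phi(x,a)}],\, H\big\} : \|w\|\le L,\ 0\preccurlyeq A \preccurlyeq \lambda^{-1} I \Big\},
\]
which, by the form of line~\ref{lin:estimate_end} together with the a priori norm bounds on $\mathbf{w}_h^k$ and on the eigenvalues of $(\Lambda_h^k)^{-1}$, contains every $\W V_{h+1}^k$ that can arise. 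The key technical estimate is a bound on the $\varepsilon$-covering number $\mathcal{N}_\varepsilon$ of $\mathcal{V}$ in the sup-norm: parametrizing by $w$ (a $d$-dimensional ball) and $A$ (a $d\times d$ matrix), a Lipschitz-in-parameters argument gives $\log \mathcal{N}_\varepsilon \le d\log(1 + 2L/\varepsilon) + d^2\log(1 + 8\beta^2\sqrt{d}/(\lambda\varepsilon^2))$, matching Lemma~D.6 of \cite{jin2019provably}.

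I would then take a union bound over an $\varepsilon$-net of $\mathcal{V}$ and over $h\in[H]$: for each net element the fixed-function bound applies, and for the realized $\W V_{h+1}^k$ I select the nearest net element $V^\dagger$ and control the discretization error $\|\sum_\tau \phi_h^\tau (\W V_{h+1}^k - V^\dagger)(x_{h+1}^\tau)\|_{(\Lambda_h^k)^{-1}}$ by a crude bound of order $\varepsilon k/\sqrt{\lambda}$. Choosing $\varepsilon$ polynomially small in $1/K$ (say $\varepsilon = dH/K$) makes this term $O(1)$ while inflating $\log\mathcal{N}_\varepsilon$ only by logarithmic factors. Collecting terms, the dominant contribution is $H\sqrt{\log\mathcal{N}_\varepsilon} = H\cdot O(d\sqrt{\chi})$, which combined with the $H\sqrt{d\log(\cdots)}$ from the self-normalized bound gives the claimed $C\cdot dH\sqrt{\chi}$ with $\chi=\log[2(c_\beta+1)dT/p]$; the union bound over the net and over $h$ is exactly what produces the $(c_\beta+1)$ and $d$ factors inside $\chi$, and since the net is applied only to the failure probability, the absolute constant $C$ remains independent of $c_\beta$.

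The main obstacle is the covering-number estimate for $\mathcal{V}$, specifically the dependence on the matrix parameter $A$: one must show that $A\mapsto \beta\sqrt{\phi^\top A\,\phi}$ is Lipschitz (after the $\min\{\cdot,H\}$ truncation) uniformly in $\phi$, so that a Frobenius-norm net of the $d^2$-dimensional parameter space induces a sup-norm net of $\mathcal{V}$. It is this $d^2$ inside the logarithm that, after taking the square root, produces an extra factor of $d$ (rather than $\sqrt{d}$) and hence the $dH$ rather than $\sqrt{d}\,H$ scaling in the final bound. Everything else is a careful but routine assembly of the self-normalized inequality, the covering bound, and the union bound.
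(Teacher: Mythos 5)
Your proposal is correct and follows essentially the same route the paper takes (and that \cite{jin2019provably} takes in its Lemma B.3): a self-normalized martingale concentration bound for a fixed value function, made uniform over the data-dependent $\W V_{h+1}^k$ via an $\varepsilon$-net of the function class $\mathcal{V}$, with the $d^2$ in the log-covering number accounting for the $dH$ scaling. The paper itself only sketches this in the remark following the lemma and defers the details to \cite{jin2019provably}, so your write-up is a faithful (and more explicit) reconstruction of the intended argument.
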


\paragraph{Remark} We use $\mathcal{V}$ to denote the set of all the value functions in the form of  $$V(\cdot)=\min \left\{\max _{a} \mathbf{w}^{\top} \boldsymbol{\phi}(\cdot, a)+\beta \sqrt{\boldsymbol{\phi}(\cdot, a)^{\top} \Lambda^{-1} \boldsymbol{\phi}(\cdot, a)}, H\right\}.$$ Clearly $\mathcal{V}$ includes all possible value function we generate throughout the algorithm. We can construct a $\varepsilon$-covering of $\mathcal{V}$ with respect to the distance dist$\left(V, V^{\prime}\right)=\sup _{x}\left|V(x)-V^{\prime}(x)\right|.$ In addition, we can prove that $\mathcal{N}_{\varepsilon},$ the $\varepsilon$-covering number of $\mathcal{V},$ can be bounded. Thus, by combining decomposition inequality we can derive this lemma.

\begin{lemma}{\rm (Lemma B.4, \cite{jin2019provably})}\label{B.4}
There exists an absolute constant $c_{\beta}$ such that for $\beta=c_{\beta} \cdot d H \sqrt{\iota}$ where $\iota=\log (2 d T / p),$ and for any fixed policy $\pi,$ on the event $\mathfrak{E}$ defined in Lemma \ref{B.3}, we have for all $(x, a, h, k) \in \mathcal{S} \times \mathcal{A} \times[H] \times[K]$
that:
\begin{align*}
\left\langle\phi(x, a), \mathbf{w}_{h}^{k}\right\rangle- Q_{h}^{\pi}(x, a)=&\mathbb{P}_{h}\left(\W V_{h+1}^{k}-V_{h+1}^{\pi}\right)(x, a)\\
&+\Delta_{h}^{k}(x, a)
\end{align*}
for some $\Delta_{h}^{k}(x, a)$ that satisfies $\left|\Delta_{h}^{k}(x, a)\right| \leq \beta \sqrt{\phi(x, a)^{\top}\left(\Lambda_{h}^{k}\right)^{-1} \phi(x, a)}$
\end{lemma}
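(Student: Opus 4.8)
The plan is to reduce the claim to the exact linear structure of the MDP and then treat the regularization and stochastic error sources separately. First I would use the linear MDP assumption to put the target $Q$-function in closed linear form: since $Q_h^\pi(x,a) = r_h(x,a) + \mathbb{P}_h V_{h+1}^\pi(x,a)$ and both $r_h$ and $\mathbb{P}_h$ are linear in $\phi$, there is a weight vector $w_h^\pi = \theta_h + \int V_{h+1}^\pi(x')\,d\mu_h(x')$ with $Q_h^\pi(x,a) = \langle \phi(x,a), w_h^\pi\rangle$, and $\|w_h^\pi\| \le 2H\sqrt{d}$ by the norm bounds on $\theta_h$ and $\mu_h$ together with $\|V_{h+1}^\pi\|_\infty \le H$.

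Next, substituting $r_h(x_h^\tau,a_h^\tau) = \langle \phi_h^\tau, \theta_h\rangle$ into the least-squares formula for $\mathbf{w}_h^k$ (line~\ref{lin:ls}) and introducing the Bellman-consistent weight $w_h^{k,\star} = \theta_h + \int \widetilde V_{h+1}^k \, d\mu_h$, so that $\langle\phi(x,a), w_h^{k,\star}\rangle = r_h(x,a) + \mathbb{P}_h\widetilde V_{h+1}^k(x,a)$, I would add and subtract the predictable part $\mathbb{P}_h\widetilde V_{h+1}^k(x_h^\tau,a_h^\tau) = \langle\phi_h^\tau, \int\widetilde V_{h+1}^k\,d\mu_h\rangle$ inside the sum. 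Using $\sum_\tau \phi_h^\tau(\phi_h^\tau)^\top = \Lambda_h^k - \lambda \mat{I}$, this yields the identity
\[
\mathbf{w}_h^k = w_h^{k,\star} - \lambda (\Lambda_h^k)^{-1} w_h^{k,\star} + (\Lambda_h^k)^{-1}\sum_{\tau=1}^{k-1}\phi_h^\tau\big[\widetilde V_{h+1}^k(x_{h+1}^\tau) - \mathbb{P}_h\widetilde V_{h+1}^k(x_h^\tau,a_h^\tau)\big].
\]
Pairing with $\phi(x,a)$ and subtracting $Q_h^\pi$, the leading term gives exactly $\langle\phi(x,a),w_h^{k,\star}\rangle - Q_h^\pi(x,a) = \mathbb{P}_h(\widetilde V_{h+1}^k - V_{h+1}^\pi)(x,a)$, which is the first claimed term, while the remaining two terms define $\Delta_h^k(x,a)$.

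It then remains to bound $\Delta_h^k$. I would apply Cauchy--Schwarz in the $(\Lambda_h^k)^{-1}$ inner product to each piece, factoring out $\sqrt{\phi(x,a)^\top(\Lambda_h^k)^{-1}\phi(x,a)}$. The regularization piece is controlled by $\lambda\|w_h^{k,\star}\|_{(\Lambda_h^k)^{-1}} \le \sqrt{\lambda}\,\|w_h^{k,\star}\| \le 2H\sqrt{d\lambda}$, using $(\Lambda_h^k)^{-1}\preccurlyeq \lambda^{-1}\mat{I}$ and $\|w_h^{k,\star}\| \le 2H\sqrt d$. The stochastic piece is precisely the self-normalized sum $\|\sum_\tau \phi_h^\tau[\widetilde V_{h+1}^k(x_{h+1}^\tau) - \mathbb{P}_h\widetilde V_{h+1}^k(x_h^\tau,a_h^\tau)]\|_{(\Lambda_h^k)^{-1}}$, which on the event $\mathfrak{E}$ of Lemma~\ref{B.3} is at most $C\,dH\sqrt{\chi}$. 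Summing gives $|\Delta_h^k(x,a)| \le (2H\sqrt d + CdH\sqrt\chi)\sqrt{\phi^\top(\Lambda_h^k)^{-1}\phi}$, and since $\chi$ and $\iota$ differ only by constants inside the logarithm, choosing $c_\beta$ large enough makes the bracket at most $\beta = c_\beta dH\sqrt\iota$, as required.

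The only genuinely delicate point is the stochastic piece, because $\widetilde V_{h+1}^k$ is \emph{not} fixed in advance: it depends on all data gathered through episode $k$, so the summands are not martingale differences with respect to a filtration in which $\widetilde V_{h+1}^k$ is predictable, and a naive Azuma or self-normalized martingale bound fails. This is exactly what Lemma~\ref{B.3} resolves, via uniform concentration over an $\varepsilon$-covering of the realizable value-function class $\mathcal V$; since I am permitted to invoke Lemma~\ref{B.3}, this difficulty is already discharged and everything else is elementary linear algebra that holds deterministically for every fixed $(x,a,h,k)$. I would only need to confirm that the covering-number accounting is consistent with the stated $\iota$, but the decomposition and the two Cauchy--Schwarz bounds above carry through verbatim.
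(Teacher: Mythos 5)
Your proof is correct and follows essentially the same route as the paper, which simply defers to Lemma B.4 of \citet{jin2019provably}: the standard decomposition of $\mathbf{w}_h^k$ into the Bellman-consistent weight, a $\lambda$-regularization term, and a self-normalized stochastic term, each bounded via Cauchy--Schwarz in the $(\Lambda_h^k)^{-1}$ norm with the stochastic part controlled by the uniform-covering event $\mathfrak{E}$ of Lemma~\ref{B.3}. You also correctly flag the only delicate point (the data-dependence of $\widetilde V_{h+1}^k$), which is exactly the issue Lemma~\ref{B.3} is designed to discharge.
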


\paragraph{Remark.}
It is noted that although the definition of $\W \delta$ is not the same as it in \cite{jin2019provably}, the proof of Lemma \ref{B.4} still holds. Then the following two lemmas can be easily derived by Lemma \ref{B.4} and induction.

\begin{lemma}\label{B.5}{\rm (Lemma B.5 (UCB), \cite{jin2019provably})}. 
On the event 
$\mathfrak{E}$ defined in Lemma \ref{B.3}. we have $$\W Q_{h}^{k}(x, a) \geq Q_{h}^{\star}(x, a)$$ for all $(x, a, h, k) \in \mathcal{S} \times \mathcal{A} \times[H] \times[K]$
\end{lemma}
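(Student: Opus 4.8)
The plan is to prove the claim by backward induction on the level $h$, from $h = H+1$ down to $h = 1$, with $k$ held fixed and working throughout on the event $\mathfrak{E}$. The engine of the argument is Lemma~\ref{B.4} specialized to the optimal policy $\pi = \pi^{\star}$ (which is admissible since Lemma~\ref{B.4} holds for \emph{any} fixed policy). For the base case $h = H+1$, both $\W Q_{H+1}^{k}$ and $Q_{H+1}^{\star}$ vanish by the terminal condition $V_{H+1} = 0$, so the inequality is immediate.

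For the inductive step I would assume $\W Q_{h+1}^{k}(x,a) \ge Q_{h+1}^{\star}(x,a)$ for all $(x,a)$. Taking the maximum over $a$ and using $\W V_{h+1}^{k}(x) = \max_{a} \W Q_{h+1}^{k}(x,a)$ (valid because $\min\{\cdot,H\}$ is monotone and thus commutes with $\max_a$) propagates the hypothesis to the value functions, giving the pointwise bound $\W V_{h+1}^{k}(x) \ge V_{h+1}^{\star}(x)$. I would then apply Lemma~\ref{B.4} with $\pi = \pi^{\star}$, which on $\mathfrak{E}$ yields
\begin{align*}
\langle \phi(x,a), \mathbf{w}_{h}^{k} \rangle - Q_{h}^{\star}(x,a) = \mathbb{P}_{h}\!\left(\W V_{h+1}^{k} - V_{h+1}^{\star}\right)(x,a) + \Delta_{h}^{k}(x,a).
\end{align*}
The first term on the right is nonnegative because $\mathbb{P}_{h}$ is a positive (expectation) operator applied to the nonnegative function $\W V_{h+1}^{k} - V_{h+1}^{\star}$ supplied by the induction hypothesis, and for the second term the bound $|\Delta_{h}^{k}(x,a)| \le \beta \sqrt{\phi(x,a)^{\top}(\Lambda_{h}^{k})^{-1}\phi(x,a)}$ gives in particular $\Delta_{h}^{k}(x,a) \ge -\beta \sqrt{\phi(x,a)^{\top}(\Lambda_{h}^{k})^{-1}\phi(x,a)}$. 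Combining these two facts shows that the \emph{unclipped} estimate $\langle \phi(x,a), \mathbf{w}_{h}^{k} \rangle + \beta \sqrt{\phi(x,a)^{\top}(\Lambda_{h}^{k})^{-1}\phi(x,a)}$ already dominates $Q_{h}^{\star}(x,a)$.

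To finish, I would verify the clipping at $H$ in $\W Q_{h}^{k} = \min\{\,\cdot\,, H\}$ does no harm: since rewards lie in $[0,1]$ and at most $H$ steps remain, $Q_{h}^{\star}(x,a) \le H$, so both arguments of the minimum exceed $Q_{h}^{\star}(x,a)$ and therefore so does the minimum itself. This closes the induction and establishes the lemma for all $(x,a,h,k)$. I do not anticipate a genuine obstacle: the monotonicity of $\mathbb{P}_{h}$ and the favorable sign of the bonus term do essentially all of the work. The only points requiring care are bookkeeping the induction hypothesis at the level of $V$ versus $Q$, invoking Lemma~\ref{B.4} with the optimal (rather than an arbitrary) policy, and confirming that the delayed-update mechanism of our algorithm is irrelevant here, since $\W Q_{h}^{k}$ is defined directly from $\Lambda_{h}^{k}$ and $\mathbf{w}_{h}^{k}$ and does not involve the reference counter $\tilde{k}$.
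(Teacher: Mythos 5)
Your proposal is correct and follows exactly the route the paper intends: the paper does not spell out a proof but states that Lemma~\ref{B.5} ``can be easily derived by Lemma~\ref{B.4} and induction,'' which is precisely your backward induction on $h$ using Lemma~\ref{B.4} with $\pi = \pi^{\star}$, the sign of the bonus term, and the harmlessness of clipping at $H$. Your closing observation that the reference counter $\tilde{k}$ plays no role here (since the lemma concerns $\W Q_h^k$, not the delayed $Q_h^k$) is also the right bookkeeping.
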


For any $(h, k) \in [H] \times [k],$ let
$\widetilde \delta_h^{k} = \widetilde V_h^{\tilde{k}}(x_h^{k}) - V_h^{\pi_{\tilde{k}}}(x_h^k)$
denote the errors of the estimated $\W V_h^{\tilde{k}}$ relative to $V_h^{\pi_{\tilde{k}}}$.
\begin{lemma}\label{B.6}{\rm (Lemma B.6 (Recursive Lemma), \cite{jin2019provably})} 
Let $\W \zeta_{h+1}^{k}=\mathbb{E}\left[\W \delta_{h+1}^{k} | x_{h}^{k}, a_{h}^{k}\right]-\W \delta_{h+1}^{k} .$ Then
on the event $\mathfrak{E}$ defined in Lemma \ref{B.3}, we have the following: for any $(k, h) \in[K] \times[H]$
$$
\W \delta_{h}^{k} \leq \W \delta_{h+1}^{k}+\W \zeta_{h+1}^{k}+2 \beta \sqrt{\left(\phi_{h}^{k}\right)^{\top}\left(\Lambda_{h}^{\tilde{k}}\right)^{-1} \phi_{h}^{k}}
$$
\end{lemma}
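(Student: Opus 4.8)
The plan is to reproduce the one-step value-difference recursion of \cite{jin2019provably} (their Lemma B.6), while tracking the fact that in Algorithm~\ref{algo:lowSwitchingAlgo} both the executed action and the bonus are governed by the reference estimate $\W Q_h^{\tilde k}$ and the reference covariance $\Lambda_h^{\tilde k}$, rather than by $\W Q_h^{k}$ and $\Lambda_h^{k}$. Throughout I work on the event $\mathfrak{E}$ of Lemma~\ref{B.3}, on which Lemma~\ref{B.4} is available. First I would rewrite $\W\delta_h^k$ as a $Q$-gap at the executed state-action pair. Since $a_h^k = \arg\max_a Q_h^k(x_h^k,a) = \arg\max_a \W Q_h^{\tilde k}(x_h^k,a)$, we have $\W V_h^{\tilde k}(x_h^k) = \W Q_h^{\tilde k}(x_h^k,a_h^k)$; and because the comparison policy $\pi_{\tilde k}$ is greedy with respect to $\W Q^{\tilde k}$, the Bellman equation gives $V_h^{\pi_{\tilde k}}(x_h^k)=Q_h^{\pi_{\tilde k}}(x_h^k,a_h^k)$ at the same action. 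Hence $\W\delta_h^k = (\W Q_h^{\tilde k}-Q_h^{\pi_{\tilde k}})(x_h^k,a_h^k)$.

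Next I would upper bound this gap. Dropping the clip $\min\{\cdot,H\}$ in the definition of $\W Q_h^{\tilde k}$ gives $\W Q_h^{\tilde k}(x_h^k,a_h^k)\le \langle \phi_h^k,\mathbf{w}_h^{\tilde k}\rangle + \beta\sqrt{(\phi_h^k)^\top (\Lambda_h^{\tilde k})^{-1}\phi_h^k}$. Applying Lemma~\ref{B.4} with the fixed policy $\pi=\pi_{\tilde k}$ and index $\tilde k$ yields $\langle \phi_h^k,\mathbf{w}_h^{\tilde k}\rangle - Q_h^{\pi_{\tilde k}}(x_h^k,a_h^k) = \mathbb{P}_h(\W V_{h+1}^{\tilde k}-V_{h+1}^{\pi_{\tilde k}})(x_h^k,a_h^k)+\Delta_h^{\tilde k}(x_h^k,a_h^k)$, with $|\Delta_h^{\tilde k}|\le \beta\sqrt{(\phi_h^k)^\top(\Lambda_h^{\tilde k})^{-1}\phi_h^k}$. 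The two bonus contributions combine into the $2\beta\sqrt{(\phi_h^k)^\top(\Lambda_h^{\tilde k})^{-1}\phi_h^k}$ term of the claim.

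Finally I would identify the remaining transition term with the martingale decomposition. By the definition of the operator $\mathbb{P}_h$ and of $\W\delta_{h+1}^k=\W V_{h+1}^{\tilde k}(x_{h+1}^k)-V_{h+1}^{\pi_{\tilde k}}(x_{h+1}^k)$, we have $\mathbb{P}_h(\W V_{h+1}^{\tilde k}-V_{h+1}^{\pi_{\tilde k}})(x_h^k,a_h^k)=\E[\W\delta_{h+1}^k\mid x_h^k,a_h^k]=\W\delta_{h+1}^k+\W\zeta_{h+1}^k$, the last equality being simply the definition of $\W\zeta_{h+1}^k$. Substituting into the previous display proves the stated recursion.

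The step needing the most care is this last identity: it requires $\W V_{h+1}^{\tilde k}$ to be a deterministic function once the history preceding episode $k$ is fixed, so that conditioning on $(x_h^k,a_h^k)$ genuinely turns $\mathbb{P}_h(\cdot)$ into $\E[\,\cdot\mid x_h^k,a_h^k]$ and $\W\zeta_{h+1}^k$ is a bona fide martingale difference. This holds because $\tilde k\le k$ and $\W Q_{h+1}^{\tilde k}$ (hence $\W V_{h+1}^{\tilde k}$) is assembled only from transitions collected strictly before episode $k$, making it measurable with respect to the filtration at the start of episode $k$. I emphasize that, unlike the later summation over $k$, the recursion itself does \emph{not} invoke the policy-update criterion; that criterion enters only afterward, to bound $(\Lambda_h^{\tilde k})^{-1}\preccurlyeq 2(\Lambda_h^{k})^{-1}$ inside the subsequent elliptical-potential argument.
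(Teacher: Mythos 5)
Your proposal is correct and follows exactly the route the paper intends: the paper states that Lemma~\ref{B.6} "can be easily derived by Lemma~\ref{B.4} and induction," and your argument is precisely that derivation --- identify $\W\delta_h^k$ with the $Q$-gap at $(x_h^k,a_h^k)$ using $\pi_k=\pi_{\tilde k}$, drop the clip, apply Lemma~\ref{B.4} at index $\tilde k$ with $\pi=\pi_{\tilde k}$, and absorb the transition term into $\W\delta_{h+1}^k+\W\zeta_{h+1}^k$, which matches the chain of inequalities the paper itself writes in its regret-bound section. Your added remarks on measurability of $\W V_{h+1}^{\tilde k}$ and on the policy-update criterion not being needed here are accurate.
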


\begin{lemma}{\rm (Lemma D.2, \cite{jin2019provably})} \label{D.2}
Let $\left\{\phi_{t}\right\}_{t \geq 0}$ be a bounded sequence in $\mathbb{R}^{d}$ satisfying $\sup _{t>0}\left\|\phi_{t}\right\| \leq 1 .$ Let $\Lambda_{0} \in \mathbb{R}^{d \times d}$ be a positive definite matrix. For any $t \geq 0,$ we define $\Lambda_{t}=\Lambda_{0}+\sum_{j=1}^{t} \phi_{j}^{\top} \phi_{j} .$ Then, if the smallest
eigenvalue of $\Lambda_{0}$ satisfies $\lambda_{\min }\left(\Lambda_{0}\right) \geq 1,$ we have
$$
\log \left[\frac{\operatorname{det}\left(\Lambda_{t}\right)}{\operatorname{det}\left(\Lambda_{0}\right)}\right] \leq \sum_{j=1}^{t} \boldsymbol{\phi}_{j}^{\top} \Lambda_{j-1}^{-1} \boldsymbol{\phi}_{j} \leq 2 \log \left[\frac{\operatorname{det}\left(\Lambda_{t}\right)}{\operatorname{det}\left(\Lambda_{0}\right)}\right]
$$
\end{lemma}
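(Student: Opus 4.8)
The plan is to prove this by the standard elliptical-potential / matrix-determinant-lemma argument, obtaining an exact expression for the log-determinant ratio and then establishing the two inequalities separately via termwise scalar bounds. First I would apply the matrix determinant lemma to each rank-one update $\Lambda_j = \Lambda_{j-1} + \phi_j\phi_j^\top$, which gives $\det(\Lambda_j) = \det(\Lambda_{j-1})\,\bigl(1 + \phi_j^\top \Lambda_{j-1}^{-1}\phi_j\bigr)$. Telescoping over $j = 1,\dots,t$ and taking logarithms yields the exact identity
\[
\log\!\left[\frac{\det(\Lambda_t)}{\det(\Lambda_0)}\right] = \sum_{j=1}^t \log\!\left(1 + \phi_j^\top \Lambda_{j-1}^{-1}\phi_j\right).
\]
Writing $u_j := \phi_j^\top \Lambda_{j-1}^{-1}\phi_j \ge 0$, the whole lemma reduces to sandwiching $\sum_j u_j$ between $\sum_j \log(1+u_j)$ and $2\sum_j \log(1+u_j)$, i.e.\ to two elementary scalar inequalities applied termwise.

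The left-hand inequality $\log[\det\Lambda_t/\det\Lambda_0] \le \sum_j u_j$ is immediate from $\log(1+x) \le x$ for $x \ge 0$, applied to each $u_j$. For the right-hand inequality I would first control the size of each $u_j$. Since $\lambda_{\min}(\Lambda_0) \ge 1$ and each update adds a positive semidefinite term, eigenvalue monotonicity gives $\lambda_{\min}(\Lambda_{j-1}) \ge 1$, hence $\Lambda_{j-1}^{-1} \preccurlyeq \mathbf{I}$ and therefore $u_j \le \norm{\phi_j}^2 \le 1$. On the range $[0,1]$ the elementary inequality $x \le 2\log(1+x)$ holds — one checks this by noting the difference vanishes at $x=0$ and has nonnegative derivative $2/(1+x)-1$ on $[0,1]$ — so $u_j \le 2\log(1+u_j)$ termwise, and summing gives $\sum_j u_j \le 2\log[\det\Lambda_t/\det\Lambda_0]$.

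The only place requiring care is the uniform bound $u_j \le 1$: this is exactly where the hypothesis $\lambda_{\min}(\Lambda_0)\ge 1$ is used, and without it the scalar inequality $x \le 2\log(1+x)$ can fail for large $x$, so the upper sandwich bound would break. Everything else is a routine telescoping of determinants together with two one-variable calculus facts, so I expect no substantive obstacle beyond correctly invoking eigenvalue monotonicity to guarantee $\Lambda_{j-1}^{-1}\preccurlyeq \mathbf{I}$.
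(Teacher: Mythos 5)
Your proof is correct and is exactly the standard elliptical-potential argument: the paper states this lemma without proof, importing it from \cite{jin2019provably}, and the proof there (following Abbasi-Yadkori et al.'s Lemma 11) proceeds by the same rank-one determinant telescoping $\det(\Lambda_j)=\det(\Lambda_{j-1})\left(1+\phi_j^{\top}\Lambda_{j-1}^{-1}\phi_j\right)$ combined with the scalar bounds $\log(1+x)\le x$ and $x\le 2\log(1+x)$ for $x\in[0,1]$. Your observation that $\lambda_{\min}(\Lambda_0)\ge 1$ is needed precisely to guarantee $\phi_j^{\top}\Lambda_{j-1}^{-1}\phi_j\le 1$ is also exactly where that hypothesis enters in the original proof.
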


\subsection{Decomposing and proving the regret bound}

In this section, we show the decomposition of the regret bound via the following lemma:

\begin{lemma} \label{A.3}
Let  $\widetilde \delta_h^k$ be defined the same as that in the start of this part, then the following bound for the regret holds:
$$
    {\rm Regret}(K) \le \sum_{k = 1}^K \widetilde \delta_1^k
$$
\end{lemma}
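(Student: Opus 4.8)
The plan is to prove Lemma~\ref{A.3} by combining the optimism guarantee from Lemma~\ref{B.5} with the definition of $\widetilde\delta_1^k$, exploiting the fact that the policy actually executed in episode $k$ is the reference policy $\pi_{\tilde k}$. First I would recall that $\operatorname{Regret}(K) = \sum_{k=1}^K [V_1^\star(x_1^k) - V_1^{\pi_k}(x_1^k)]$, where $\pi_k$ denotes the policy the agent plays in episode $k$. The key observation is that by the update rule in line~\ref{lin:update_q}, the executed action maximizes $Q_h^k = \widetilde Q_h^{\tilde k}$, so the policy played in episode $k$ is precisely the greedy policy with respect to $\widetilde Q^{\tilde k}$, i.e.\ $\pi_k = \pi_{\tilde k}$ in the notation of the lemma. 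Hence $V_1^{\pi_k}(x_1^k) = V_1^{\pi_{\tilde k}}(x_1^k)$, which replaces the second term in the regret by the value of the reference policy.

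Next I would control the first term $V_1^\star(x_1^k)$ from above. By Lemma~\ref{B.5} (UCB), on the event $\mathfrak{E}$ we have $\widetilde Q_h^{\tilde k}(x,a) \ge Q_h^\star(x,a)$ for all arguments, and since both the greedy value and the optimal value are obtained by maximizing over $a$, this yields $\widetilde V_h^{\tilde k}(x) = \max_a \widetilde Q_h^{\tilde k}(x,a) \ge \max_a Q_h^\star(x,a) = V_h^\star(x)$. Applying this at $h=1$, $x = x_1^k$ gives $V_1^\star(x_1^k) \le \widetilde V_1^{\tilde k}(x_1^k)$. Combining the two steps, each summand is bounded as
\[
V_1^\star(x_1^k) - V_1^{\pi_k}(x_1^k) \;\le\; \widetilde V_1^{\tilde k}(x_1^k) - V_1^{\pi_{\tilde k}}(x_1^k) \;=\; \widetilde\delta_1^k,
\]
where the last equality is just the definition of $\widetilde\delta_1^k$ given right before Lemma~\ref{B.6}. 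Summing over $k \in [K]$ delivers the claimed bound $\operatorname{Regret}(K) \le \sum_{k=1}^K \widetilde\delta_1^k$.

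The step I expect to be the main (if modest) obstacle is carefully justifying the identification $\pi_k = \pi_{\tilde k}$ and the attendant value-function bookkeeping across the delayed update. One must verify that $\widetilde V_h^{\tilde k}$, the reference value function frozen at the last update episode $\tilde k \le k$, is genuinely the quantity whose greedy policy the agent executes at \emph{every} level $h$ in episode $k$, and that $\tilde k$ is consistent across all $h$ (the reference counter is shared). Since the optimism bound from Lemma~\ref{B.5} is stated for all $(x,a,h,k)$ including the reference indices, and the greedy-max operation preserves the inequality, no new concentration argument is needed here — the lemma is essentially a clean consequence of optimism plus the observation that executing the greedy policy of $\widetilde Q^{\tilde k}$ \emph{is} following $\pi_{\tilde k}$. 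The genuine work of turning $\sum_k \widetilde\delta_1^k$ into the final $\widetilde O(\sqrt{d^3H^4K})$ bound is deferred to the recursive analysis via Lemma~\ref{B.6}, and is not part of this statement.
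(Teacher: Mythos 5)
Your proposal is correct and follows essentially the same route as the paper: the paper's own (very terse) proof is exactly the combination of optimism ($V_1^\star(x_1^k) \le \widetilde V_1^{\tilde k}(x_1^k)$ via Lemma~\ref{B.5}) with the identification $\pi_k = \pi_{\tilde k}$ coming from the delayed update rule, summed over $k$. Your version is merely more explicit about the bookkeeping (in particular the consistency of the reference index $\tilde k$ across levels $h$), which the paper leaves implicit.
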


The proof of this lemma is straightforward: if we notice that the value function computed by the algorithm always estimates more than the true value, so the following equation holds:  
\begin{equation}
\begin{split}
{\rm Regret}(K) =& \sum_{k = 1} ^{K} [V_1^* (x_1^k) - V_1^{\pi_k}(x_1^k)]\\
\le&\sum_{k = 1}^{K} [\widetilde V_1^{\tilde{k}}(x_1^k) - V_1^{\pi_{\tilde{k}}}(x_1^k)]
\end{split}
\end{equation}

\subsection{Proof of the main theorem: the regret bound}

In this part, we will prove the regret bound of the main theorem stated in the section \ref{sec:algoIntro}.

Firstly, conditioning on the event $\mathfrak{E}$ defined in Lemma \ref{B.4}, we have:

\begin{equation*}
\begin{split}
\left[\W {Q}_{h}^{\tilde{k}}-Q_{h}^{\pi_{\tilde{k}}}\right](x, a)=&{\rm{\mathbb{P}}}_{h}\left(V_{h+1}^{\tilde{k}}-V_{h+1}^{\pi_{\tilde{k}}}\right)\left(x, a\right)+\Delta_{h}^{\tilde{k}}(x, a)\\
\end{split}
\end{equation*}




Our update rule implies $\pi_{\tilde{k}} = \pi_k, \W {Q}_{h}^{\tilde{k}} = {Q}_{h}^{\tilde{k}},$ so subtracting the previous equations, we have

\begin{align*}
\left(\W {Q}_{h}^{\tilde{k}}-Q_{h}^{\pi_{\tilde{k}}}\right)\left(x_{h}^{k}, a_{h}^{k}\right) &\le \Delta_{h}^{\tilde{k}}(x, a)+\W {\delta}_{h}^{k}\\
&\le  \Delta_{h}^{\tilde{k}}(x, a) +
 \W \zeta_h^{k} + 
 \W {\delta}_{h+1}^{k}
\end{align*}

Since $\{ \zeta_h^{k} \}_{k\in [K]}$ 
are bounded martingale difference sequence (adapted to the history up to episode $k-1$), by Azuma-Hoeffding inequality,
we have, with probability at least $1-p/2$
\[
\sum_{k = 1}^{K} \sum_{h = 1}^{H} \W \zeta_h^{k} 
\le H\sqrt{T\iota}.
\]
Let this event be $\mathfrak{E}'$.

Notice our update rule implies $\phi_h^{k} (\Lambda_h^{\tilde{k}})^{-1} \phi_h^{k} \le 2 \phi_h^{k} (\Lambda_h^{k})^{-1} \phi_h^{k} $ , now we can use Lemma \ref{B.3}, \ref{B.6}, \ref{A.3} to do the recursion for the regret bound. In summary, conditioning on $\mathfrak{E}$ and $\mathfrak{E}'$, the following inequalities hold:

\begin{equation*}
\begin{split}
{\rm Regret}(K) \le & \sum_{k = 1}^K \widetilde \delta_1^k \\
\le & \sum_{k = 1}^{K} \sum_{h = 1}^{H} \W \zeta_h^{k} 
+ 3\beta \sum_{k = 1}^{K} \sum_{h = 1}^{H} \sqrt{\phi_h^{k} (\Lambda_h^{\tilde{k}})^{-1} \phi_h^{k}}\\
\le& H\sqrt{T\iota} + 6H\sqrt{dK\iota}\\
\end{split}
\end{equation*}

The 
second part holds from the potential lemma and the Cauchy-Schwarz inequality.

\subsection{Analysis for switching cost}\label{det lemma}
%
\begin{proof}[Proof of Lemma~\ref{detlemma_2}]


Assume $\phi_{\tau}^{d} = (\alpha_1, \alpha_2, \cdots, \alpha_d)^{\top}.$ We know that $\sum_{i = 1}^d \alpha_i^2 \le 1,$ thus $|\alpha_i| \le 1$ for all $i \in [d].$ Since $\phi_{\tau}^{d}(\phi_{\tau}^{d})^{\top} = (\alpha_i \alpha_j)_{ij},$ the absolute value of each component of $\phi_{\tau}^{d}(\phi_{\tau}^{d})^{\top}$ is no more than 1, and then the absolute value of each component of $A_d$ is no more than $K + \lambda.$

Use $\widetilde A_d = (a_{ij})$ to denote a $d$-dimensional matrix satisfying the feature above, $i.e.,$ $|a_{ij}| \le K + \lambda.$ Clearly, if the $1_{st}$ row and $j_{th}$ column are deleted, the rest $(d - 1)$-dimensional matrix is $\widetilde A_{d - 1}.$ Thus

\begin{equation*}
\begin{split}
|\det{A_d}| \le& \sum_{j = 1}^d |a_{1j}| \cdot |\det{A_{d - 1}}|\\
\le& d \cdot (K + \lambda) \cdot |\det{A_{d - 1}}|\\
\le& d^d (K + \lambda)^d\\
\end{split}
\end{equation*}

Hence, $\log \det A_d = d\log d + d \log(K + \lambda) = O(d\log K).$
\end{proof}

To prove Lemma~\ref{detlemma}, we will use the following two linear algebraic facts.
\begin{fact}[Woodbury matrix identity]
For any PSD matrices $A, \Delta\in \mathbb{R}^{d\times d}$, suppose $A$ is invertible, then we have
\begin{align*}
&(A+\Delta)^{-1}\\
=& A^{-1} - A^{-1}\Delta^{1/2}(I + \Delta^{1/2}A^{-1}\Delta^{1/2})^{-1}\Delta^{1/2}A^{-1}.
\end{align*}
\end{fact}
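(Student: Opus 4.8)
The plan is to prove the identity by direct verification: I will name the claimed right-hand side $M$ and check that $M(A+\Delta) = I$, which suffices since $A+\Delta$ is a square matrix and a one-sided inverse of a square matrix is a two-sided inverse. Two preliminary observations are needed before the computation. First, because $\Delta$ is PSD, the spectral theorem supplies a symmetric PSD square root $\Delta^{1/2}$ with $\Delta^{1/2}\Delta^{1/2}=\Delta$; this is what makes the stated expression well-defined. Second, I must confirm that the inner factor $I+\Delta^{1/2}A^{-1}\Delta^{1/2}$ is invertible: since $A$ is PSD and invertible it is in fact positive definite, so $A^{-1}$ is positive definite, whence $\Delta^{1/2}A^{-1}\Delta^{1/2}$ is PSD and $I+\Delta^{1/2}A^{-1}\Delta^{1/2}$ is positive definite, hence invertible.

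For the main computation, abbreviate $S=\Delta^{1/2}$ and $G=(I+SA^{-1}S)^{-1}$, so that $M = A^{-1} - A^{-1}SGSA^{-1}$ and $A+\Delta = A+S^2$. Expanding the first piece gives $MA = I - A^{-1}SGS$. The key algebraic observation is that the definition of $G$ rearranges to $SA^{-1}S = G^{-1}-I$, and therefore $G\,(SA^{-1}S) = G(G^{-1}-I) = I-G$. Using this to collapse the middle term,
\begin{align*}
M S^2 &= A^{-1}S^2 - A^{-1}S\,G\,(SA^{-1}S)\,S = A^{-1}S^2 - A^{-1}S(I-G)S = A^{-1}SGS .
\end{align*}
Adding the two pieces, $M(A+\Delta) = MA + MS^2 = \bigl(I - A^{-1}SGS\bigr) + A^{-1}SGS = I$, which establishes the identity.

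The computation is almost entirely bookkeeping; the only step needing genuine care is the cancellation of the two $G$-dependent terms, which works precisely because one recognizes $SA^{-1}S$ as $G^{-1}-I$. I therefore expect the main (mild) obstacle to be the invertibility check for $I+\Delta^{1/2}A^{-1}\Delta^{1/2}$ and the care in handling $\Delta^{1/2}$ as a genuine symmetric square root, rather than the final algebra. An alternative route would be to quote the general Sherman--Morrison--Woodbury formula and specialize it with $U = V^{\top} = \Delta^{1/2}$ and $C = I$, but since that formula would itself require justification, the self-contained direct verification above is preferable.
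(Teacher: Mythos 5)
Your proof is correct, but note that the paper itself offers no proof of this Fact at all: it is quoted as a standard linear algebra identity (the Woodbury formula specialized to $U=V^{\top}=\Delta^{1/2}$, $C=I$) and used as a black box inside the proof of Lemma~2. So the comparison here is between citation and derivation. Your direct verification is sound: positive definiteness of $A^{-1}$ makes $I+\Delta^{1/2}A^{-1}\Delta^{1/2}$ positive definite and hence invertible, the spectral theorem legitimizes the symmetric square root $S=\Delta^{1/2}$, and the cancellation $G\,(SA^{-1}S)=G(G^{-1}-I)=I-G$ collapses $M(A+\Delta)$ to $I$; since all matrices are square, the one-sided identity suffices. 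What your route buys is self-containedness and a cleaner treatment of the PSD specialization -- the general Sherman--Morrison--Woodbury statement requires its own invertibility hypotheses, and specializing it correctly (in particular, checking that the inner matrix is invertible for merely PSD $\Delta$) is exactly the step you verify explicitly. What the paper's route buys is brevity, which is reasonable for a result this standard. One small remark: your argument nowhere uses symmetry of $S$ beyond well-definedness, so the same computation proves the more general identity $(A+UU^{\top})^{-1}=A^{-1}-A^{-1}U(I+U^{\top}A^{-1}U)^{-1}U^{\top}A^{-1}$ for any rectangular $U$, which is sometimes the more convenient form.
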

\begin{fact}[Matrix determinant lemma]
For any PSD matrices $A, \Delta\in\mathbb{R}^{d\times d}$, suppose $A$ is invertible, then
\[
\det(A+\Delta) = \det(I + \Delta^{1/2} A^{-1} \Delta^{1/2})\cdot\det(A).
\]
\end{fact}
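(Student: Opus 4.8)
The plan is to reduce the identity to the commutation property of the determinant (Sylvester's identity) after factoring out $A$. Since $A$ is PSD and invertible it is in fact positive definite, so $A^{-1}$ exists, and since $\Delta$ is PSD its unique PSD square root $\Delta^{1/2}$ is well defined with $\Delta^{1/2}\Delta^{1/2} = \Delta$. First I would write $A + \Delta = A\,(I + A^{-1}\Delta)$ and use multiplicativity of the determinant to obtain $\det(A+\Delta) = \det(A)\cdot\det(I + A^{-1}\Delta)$. It then remains to show $\det(I + A^{-1}\Delta) = \det(I + \Delta^{1/2}A^{-1}\Delta^{1/2})$, since multiplying that through by $\det(A)$ gives the claimed formula.

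For this I would set $M = A^{-1}\Delta^{1/2}$ and $N = \Delta^{1/2}$, so that $MN = A^{-1}\Delta$ while $NM = \Delta^{1/2}A^{-1}\Delta^{1/2}$. The remaining claim is then exactly $\det(I + MN) = \det(I + NM)$, the Sylvester (Weinstein--Aronszajn) determinant identity for square matrices. To establish it I would consider the $2d\times 2d$ block matrix $\begin{pmatrix} I & -M \\ N & I\end{pmatrix}$ and compute its determinant in two ways via Schur complements: eliminating the top-left identity block leaves the Schur complement $I - N(-M) = I + NM$, giving determinant $\det(I+NM)$, while eliminating the bottom-right identity block leaves $I - (-M)N = I + MN$, giving determinant $\det(I + MN)$. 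Equating the two expressions yields $\det(I+MN)=\det(I+NM)$, and combining with the factoring step above completes the proof.

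The computation is entirely routine and there is no genuine obstacle; the only point that needs care is the Schur-complement step, where one must track the off-diagonal signs so that the two Schur complements come out as $I + NM$ and $I + MN$ respectively (rather than with a spurious sign). I note that one could alternatively prove Sylvester's identity by a density argument—verifying it when $M$ is invertible, where $I + NM = M^{-1}(I+MN)M$, then extending by continuity—but the block-matrix route requires no invertibility assumption on $M$ or $N$ and is cleanest here.
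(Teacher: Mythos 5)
Your proof is correct. Note that the paper itself gives no proof of this statement: it is listed as a known linear-algebra ``Fact'' (alongside the Woodbury identity) and is invoked as a black box in the proof of Lemma~2, so there is no in-paper argument to compare against. Your derivation supplies exactly the standard missing justification: the factorization $\det(A+\Delta)=\det(A)\det(I+A^{-1}\Delta)$ is valid since $A$ is invertible, the choice $M=A^{-1}\Delta^{1/2}$, $N=\Delta^{1/2}$ correctly reduces the claim to Sylvester's identity $\det(I+MN)=\det(I+NM)$, and your block-matrix proof of Sylvester via the two Schur complements of $\begin{pmatrix} I & -M \\ N & I\end{pmatrix}$ has the signs right, yielding $I+NM$ and $I+MN$ respectively. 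The only hypotheses used are that $\Delta$ is PSD (so $\Delta^{1/2}$ exists) and $A$ is invertible, which matches the Fact as stated; your block-matrix route also avoids any invertibility assumption on $\Delta$, which matters since $\Delta = B - A$ in the paper's application can be singular.
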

\begin{proof}[Proof of Lemma~\ref{detlemma}]
By the matrix determinant lemma, we only need to show that 
\[
 \lambda_{\max}(I + \Delta^{1/2} A^{-1} \Delta^{1/2})\ge 2.
\]
Since $A^{-1}\not\preccurlyeq 2B^{-1}$, 
it must be the case that, for some $x$ with $\|x\|_2=1$, and 
\[
x^\top (A^{-1} - 2B^{-1}) x\ge 0.
\]
Denote $\Delta = B- A \succeq 0$.
By Woodbury identity, we have
\begin{align*}
&x^\top (A^{-1} - 2B^{-1})x  \\
=&x^\top (2A^{-1}\Delta^{1/2}(I + \Delta^{1/2}A^{-1}\Delta^{1/2})^{-1}\Delta^{1/2}A^{-1} - A^{-1})x\\  
\ge& 0.
\end{align*}
Let $y = A^{-1/2}x$, we then have,
\begin{align*}
&2y^{\top} A^{-1/2}\Delta^{1/2}(I + \Delta^{1/2}A^{-1}\Delta^{1/2})^{-1}\Delta^{1/2}A^{-1/2} y\\
\ge& \|y\|_2^2.
\end{align*}
Hence, 
\[
\lambda_{\max}(A^{-1/2}\Delta^{1/2}(I + \Delta^{1/2}A^{-1}\Delta^{1/2})^{-1}\Delta^{1/2}A^{-1/2} )
\ge 1/2.
\]
Let us denote $M=A^{-1/2}\Delta^{1/2}$, we have
\[
\lambda_{\max}(M(I + M^\top M)^{-1}M^\top)\ge 1/2.
\]
Let $M = U\Sigma V^\top$ be the SVD decomposition of $M$, where $U$ and $V$ are orthonormal and $\Sigma$ is diagonal.
Then we have,
\begin{align*}
&M(I + M^\top M)^{-1}M^\top\\
= &U\Sigma V^\top(I + V\Sigma^2 V^\top )^{-1} V \Sigma U^\top \\
= &U\Sigma(I + \Sigma^2)^{-1}\Sigma U^\top.
\end{align*}
Note that $\Sigma=\mathrm{diag}(\sigma_1,\sigma_2,
\ldots, \sigma_d)$ is diagonal, we have
\[
\max_i \frac{\sigma_i^2}{1+\sigma_i^2}\ge 1/2
\Rightarrow
\max_{i} \sigma_i^2 \ge 1.
\]
We additionally rewrite $I + \Delta^{1/2} A^{-1} \Delta^{1/2}$ as
\[
I + \Delta^{1/2} A^{-1} \Delta^{1/2} 
= I + M^{\top} M
= I + V\Sigma^2 V^\top.
\]
Thus we have
\[
\lambda_{\max}(I + \Delta^{1/2} A^{-1} \Delta^{1/2})\ge 2
\]
as desired.
\end{proof}


Combining the lemmas above, we are now ready to prove the switching cost bound in the theorem \ref{thm:main}.

\begin{proof}[Proof of Theorem~\ref{thm:main}]
Let $\{ k_1, k_2, \cdots, k_{N_{\rm switch}^{\rm gl}} \}$ denote the $\tilde{k}$ picked by the algorithm. From Lemma \ref{detlemma} we know $\det \Lambda_{k_{i+1}} \ge 2\det \Lambda_{k_{i}} \ge 2^{k_{i+1}}\det \Lambda_0.$ Hence, by combining Lemma \ref{detlemma_2} we have$N_{\rm switch}^{\rm gl} \le c_0 \log \det \Lambda_K = O(d\log K).$
\end{proof}

\section{Analysis for Lower Bound}

We will construct a set of linear MDPs with dimension $d = 4d_0$ and number of steps $H = 2H_0$ to prove theorem \ref{thm:lowerBoundThm}. These MDPs will have different action when facing different states, and we claim that an MDP with the number of action $|\mathcal{A}|$ and the number of steps $H$ can be transformed to an MDP with the number of action 2 and the number of steps $H\log|\mathcal{A}|.$

In fact, we can just add a binary tree  with depth $\log|\mathcal{A}|$ before each transformation, and each intermediate state has two possible action. Then the different leaves of the tree denote for the different chosen of action. Clearly the switching cost will not decrease during this operation.


\subsection{Construction}

For fixed $d_0$, let $e_i$ denote the vector $(0, 0, \cdots, 1, \cdots, 0)$  in $4d_0$-dimensional space, whose $i_{th}$ component is 1 and others are 0s. We first construct a set of linear MDPs $M_{*}$ with dimension $4d_0$ as follows:

The state space $\mathcal{S}$ is partitioned into 4 components, which we define as follows:
\begin{equation*}
\begin{split}
{S_0} = & \{ s_{ h,i} | h \in [H], i \in [d_0]\}\\
{U} = & \{ u \}\\
{V} = & \{ v \}\\
{W} = & \{ w \}\\
\end{split}
\end{equation*}
So we have $\mathcal{S} = {S_0} \cup  {U} \cup  {V} \cup  {W}$. Intuitively, $S_0$ is the space where the agent usually explores. $u, v, w$ are three auxiliary states for ``hiding'' the rewards and normalizing the paths taken by the agent. We take $u$ as the initial state for each episode.

Now we consider the action space as follows:
\begin{equation*}
\begin{split}
{A} =& \{ a_{j} | j \in [d_0] \}\\
{\widetilde A} =& \{ \W a \}\\
\end{split}
\end{equation*}
The agent can take any action in $A$ at state $u,$ while there is only one feasible action $\W a$ for all other states. We use $\mathcal{A} = A \cup \W A$ to denote the whole action space and clearly the maximum number of feasible action in a particular state is $d_0.$

The last information the algorithm knows before exploration is the feature vectors of each state-action pair:
\begin{equation*}
\begin{split}
\phi(u, a_j) = & e_j\\
\phi(s_{h, i}, \W a) = & e_{2d_0 + i}\\
\phi(v, \W a) = & e_{3d_0}\\
\phi(w, \W a) = & e_{4d_0}\\
\end{split}
\end{equation*}
We can easily verify that the agents cannot extract any useful information about the special action by these feature vectors. More precisely, these feature vectors are orthonormal vectors, given constant $h.$
\begin{equation*}
\begin{split}
\mu_{2h}(s_{h,i}) = & e_i\\
\mu_{2h+1}(u) = & e_{2 d_0 + i_h}\\
\mu_{h}(v) = & e_{3d_0},\;\; h\not = 2h_*+1\\
\mu_{2h_*+1}(v) = & e_{j_{h_*}d_0 + i_{h_*}}\\
\mu_{2h}(w) = & e_{4d_0}\\
\mu_{2h+1}(w) = & \sum_{i \in [d_0]} \sum_{j = 0, 1}e_{jd_0 + i} - e_{j_h d_0 + i_h} + e_{4d_0}\\
\end{split}
\end{equation*}
where $h_* \sim \rm{Unif}([H_0]),$ $i_h \sim \rm{Unif}[d_0]$ for all $h \in [h_*]$ and other vectors are all $(0, 0, \cdots, 0).$ We can easily find that $v$ and $w$ are two sinks. As above shows, the agent starts at $u$ and the action $a_j$ leads to state $s_{1, j}.$ If $j = i_h,$ then the agent comes back to $u$ and then selects the next action, else the agent goes to $w$ and stays in $w$ forever. In other words, the agent will finally goes to sink $v$ at step $2h_* + 1$ along with the correct path $i_h,$ or it will go to $w$ if taking any wrong action.
We illustrate the construction in Figure 2. 
Note that denote state $u$ as $u_1, u_2, \ldots, $ for the arrival of the $H$-th step at $u$. 
We do the same for $v$ and $w$.

The reward function is quite simple: the agent gets reward 1 only at state $v:$
\begin{equation*}
\begin{split}
\theta_h = & e_{3d_0}\\
\end{split}
\end{equation*}


\begin{figure}
    \centering
    
    \label{fig:MDPfig}
    \tikzset{every picture/.style={line width=0.75pt}} 

\begin{tikzpicture}[x=0.75pt,y=0.75pt,yscale=-0.8,xscale=0.8]

\draw  [color={rgb, 255:red, 65; green, 117; blue, 5 }  ,draw opacity=1 ] (330,215) .. controls (330,206.72) and (336.72,200) .. (345,200) .. controls (353.28,200) and (360,206.72) .. (360,215) .. controls (360,223.28) and (353.28,230) .. (345,230) .. controls (336.72,230) and (330,223.28) .. (330,215) -- cycle ;
\draw  [color={rgb, 255:red, 65; green, 117; blue, 5 }  ,draw opacity=1 ] (210,215) .. controls (210,206.72) and (216.72,200) .. (225,200) .. controls (233.28,200) and (240,206.72) .. (240,215) .. controls (240,223.28) and (233.28,230) .. (225,230) .. controls (216.72,230) and (210,223.28) .. (210,215) -- cycle ;
\draw  [color={rgb, 255:red, 65; green, 117; blue, 5 }  ,draw opacity=1 ] (90,215) .. controls (90,206.72) and (96.72,200) .. (105,200) .. controls (113.28,200) and (120,206.72) .. (120,215) .. controls (120,223.28) and (113.28,230) .. (105,230) .. controls (96.72,230) and (90,223.28) .. (90,215) -- cycle ;
\draw  [color={rgb, 255:red, 65; green, 117; blue, 5 }  ,draw opacity=1 ] (450,215) .. controls (450,206.72) and (456.72,200) .. (465,200) .. controls (473.28,200) and (480,206.72) .. (480,215) .. controls (480,223.28) and (473.28,230) .. (465,230) .. controls (456.72,230) and (450,223.28) .. (450,215) -- cycle ;
\draw  [color={rgb, 255:red, 65; green, 117; blue, 5 }  ,draw opacity=1 ] (330,125) .. controls (330,116.72) and (336.72,110) .. (345,110) .. controls (353.28,110) and (360,116.72) .. (360,125) .. controls (360,133.28) and (353.28,140) .. (345,140) .. controls (336.72,140) and (330,133.28) .. (330,125) -- cycle ;
\draw  [color={rgb, 255:red, 65; green, 117; blue, 5 }  ,draw opacity=1 ] (210,125) .. controls (210,116.72) and (216.72,110) .. (225,110) .. controls (233.28,110) and (240,116.72) .. (240,125) .. controls (240,133.28) and (233.28,140) .. (225,140) .. controls (216.72,140) and (210,133.28) .. (210,125) -- cycle ;
\draw  [color={rgb, 255:red, 65; green, 117; blue, 5 }  ,draw opacity=1 ] (90,125) .. controls (90,116.72) and (96.72,110) .. (105,110) .. controls (113.28,110) and (120,116.72) .. (120,125) .. controls (120,133.28) and (113.28,140) .. (105,140) .. controls (96.72,140) and (90,133.28) .. (90,125) -- cycle ;
\draw  [color={rgb, 255:red, 65; green, 117; blue, 5 }  ,draw opacity=1 ] (450,125) .. controls (450,116.72) and (456.72,110) .. (465,110) .. controls (473.28,110) and (480,116.72) .. (480,125) .. controls (480,133.28) and (473.28,140) .. (465,140) .. controls (456.72,140) and (450,133.28) .. (450,125) -- cycle ;
\draw  [color={rgb, 255:red, 65; green, 117; blue, 5 }  ,draw opacity=1 ] (330,305) .. controls (330,296.72) and (336.72,290) .. (345,290) .. controls (353.28,290) and (360,296.72) .. (360,305) .. controls (360,313.28) and (353.28,320) .. (345,320) .. controls (336.72,320) and (330,313.28) .. (330,305) -- cycle ;
\draw  [color={rgb, 255:red, 65; green, 117; blue, 5 }  ,draw opacity=1 ] (210,305) .. controls (210,296.72) and (216.72,290) .. (225,290) .. controls (233.28,290) and (240,296.72) .. (240,305) .. controls (240,313.28) and (233.28,320) .. (225,320) .. controls (216.72,320) and (210,313.28) .. (210,305) -- cycle ;
\draw  [color={rgb, 255:red, 65; green, 117; blue, 5 }  ,draw opacity=1 ] (90,305) .. controls (90,296.72) and (96.72,290) .. (105,290) .. controls (113.28,290) and (120,296.72) .. (120,305) .. controls (120,313.28) and (113.28,320) .. (105,320) .. controls (96.72,320) and (90,313.28) .. (90,305) -- cycle ;
\draw  [color={rgb, 255:red, 65; green, 117; blue, 5 }  ,draw opacity=1 ] (450,305) .. controls (450,296.72) and (456.72,290) .. (465,290) .. controls (473.28,290) and (480,296.72) .. (480,305) .. controls (480,313.28) and (473.28,320) .. (465,320) .. controls (456.72,320) and (450,313.28) .. (450,305) -- cycle ;
\draw  [color={rgb, 255:red, 65; green, 117; blue, 5 }  ,draw opacity=1 ] (330,395) .. controls (330,386.72) and (336.72,380) .. (345,380) .. controls (353.28,380) and (360,386.72) .. (360,395) .. controls (360,403.28) and (353.28,410) .. (345,410) .. controls (336.72,410) and (330,403.28) .. (330,395) -- cycle ;
\draw  [color={rgb, 255:red, 65; green, 117; blue, 5 }  ,draw opacity=1 ] (210,395) .. controls (210,386.72) and (216.72,380) .. (225,380) .. controls (233.28,380) and (240,386.72) .. (240,395) .. controls (240,403.28) and (233.28,410) .. (225,410) .. controls (216.72,410) and (210,403.28) .. (210,395) -- cycle ;
\draw  [color={rgb, 255:red, 65; green, 117; blue, 5 }  ,draw opacity=1 ] (90,395) .. controls (90,386.72) and (96.72,380) .. (105,380) .. controls (113.28,380) and (120,386.72) .. (120,395) .. controls (120,403.28) and (113.28,410) .. (105,410) .. controls (96.72,410) and (90,403.28) .. (90,395) -- cycle ;
\draw  [color={rgb, 255:red, 65; green, 117; blue, 5 }  ,draw opacity=1 ] (450,395) .. controls (450,386.72) and (456.72,380) .. (465,380) .. controls (473.28,380) and (480,386.72) .. (480,395) .. controls (480,403.28) and (473.28,410) .. (465,410) .. controls (456.72,410) and (450,403.28) .. (450,395) -- cycle ;
\draw  [color={rgb, 255:red, 144; green, 19; blue, 254 }  ,draw opacity=1 ] (150,260) .. controls (150,251.72) and (156.72,245) .. (165,245) .. controls (173.28,245) and (180,251.72) .. (180,260) .. controls (180,268.28) and (173.28,275) .. (165,275) .. controls (156.72,275) and (150,268.28) .. (150,260) -- cycle ;
\draw  [color={rgb, 255:red, 144; green, 19; blue, 254 }  ,draw opacity=1 ] (150,350) .. controls (150,341.72) and (156.72,335) .. (165,335) .. controls (173.28,335) and (180,341.72) .. (180,350) .. controls (180,358.28) and (173.28,365) .. (165,365) .. controls (156.72,365) and (150,358.28) .. (150,350) -- cycle ;
\draw  [color={rgb, 255:red, 144; green, 19; blue, 254 }  ,draw opacity=1 ] (150,170) .. controls (150,161.72) and (156.72,155) .. (165,155) .. controls (173.28,155) and (180,161.72) .. (180,170) .. controls (180,178.28) and (173.28,185) .. (165,185) .. controls (156.72,185) and (150,178.28) .. (150,170) -- cycle ;
\draw  [color={rgb, 255:red, 144; green, 19; blue, 254 }  ,draw opacity=1 ] (150,80) .. controls (150,71.72) and (156.72,65) .. (165,65) .. controls (173.28,65) and (180,71.72) .. (180,80) .. controls (180,88.28) and (173.28,95) .. (165,95) .. controls (156.72,95) and (150,88.28) .. (150,80) -- cycle ;
\draw  [color={rgb, 255:red, 139; green, 87; blue, 42 }  ,draw opacity=1 ] (270,260) .. controls (270,251.72) and (276.72,245) .. (285,245) .. controls (293.28,245) and (300,251.72) .. (300,260) .. controls (300,268.28) and (293.28,275) .. (285,275) .. controls (276.72,275) and (270,268.28) .. (270,260) -- cycle ;
\draw  [color={rgb, 255:red, 139; green, 87; blue, 42 }  ,draw opacity=1 ] (270,350) .. controls (270,341.72) and (276.72,335) .. (285,335) .. controls (293.28,335) and (300,341.72) .. (300,350) .. controls (300,358.28) and (293.28,365) .. (285,365) .. controls (276.72,365) and (270,358.28) .. (270,350) -- cycle ;
\draw  [color={rgb, 255:red, 139; green, 87; blue, 42 }  ,draw opacity=1 ] (270,170) .. controls (270,161.72) and (276.72,155) .. (285,155) .. controls (293.28,155) and (300,161.72) .. (300,170) .. controls (300,178.28) and (293.28,185) .. (285,185) .. controls (276.72,185) and (270,178.28) .. (270,170) -- cycle ;
\draw  [color={rgb, 255:red, 139; green, 87; blue, 42 }  ,draw opacity=1 ] (270,80) .. controls (270,71.72) and (276.72,65) .. (285,65) .. controls (293.28,65) and (300,71.72) .. (300,80) .. controls (300,88.28) and (293.28,95) .. (285,95) .. controls (276.72,95) and (270,88.28) .. (270,80) -- cycle ;
\draw  [color={rgb, 255:red, 208; green, 2; blue, 27 }  ,draw opacity=1 ] (390,260) .. controls (390,251.72) and (396.72,245) .. (405,245) .. controls (413.28,245) and (420,251.72) .. (420,260) .. controls (420,268.28) and (413.28,275) .. (405,275) .. controls (396.72,275) and (390,268.28) .. (390,260) -- cycle ;
\draw  [color={rgb, 255:red, 208; green, 2; blue, 27 }  ,draw opacity=1 ] (390,350) .. controls (390,341.72) and (396.72,335) .. (405,335) .. controls (413.28,335) and (420,341.72) .. (420,350) .. controls (420,358.28) and (413.28,365) .. (405,365) .. controls (396.72,365) and (390,358.28) .. (390,350) -- cycle ;
\draw  [color={rgb, 255:red, 208; green, 2; blue, 27 }  ,draw opacity=1 ] (390,170) .. controls (390,161.72) and (396.72,155) .. (405,155) .. controls (413.28,155) and (420,161.72) .. (420,170) .. controls (420,178.28) and (413.28,185) .. (405,185) .. controls (396.72,185) and (390,178.28) .. (390,170) -- cycle ;
\draw  [color={rgb, 255:red, 208; green, 2; blue, 27 }  ,draw opacity=1 ] (390,80) .. controls (390,71.72) and (396.72,65) .. (405,65) .. controls (413.28,65) and (420,71.72) .. (420,80) .. controls (420,88.28) and (413.28,95) .. (405,95) .. controls (396.72,95) and (390,88.28) .. (390,80) -- cycle ;
\draw [color={rgb, 255:red, 126; green, 211; blue, 33 }  ,draw opacity=1 ]   (165,80) -- (106.6,123.8) ;
\draw [shift={(105,125)}, rotate = 323.13] [color={rgb, 255:red, 126; green, 211; blue, 33 }  ,draw opacity=1 ][line width=0.75]    (10.93,-3.29) .. controls (6.95,-1.4) and (3.31,-0.3) .. (0,0) .. controls (3.31,0.3) and (6.95,1.4) .. (10.93,3.29)   ;
\draw [color={rgb, 255:red, 126; green, 211; blue, 33 }  ,draw opacity=1 ]   (165,80) -- (223.4,123.8) ;
\draw [shift={(225,125)}, rotate = 216.87] [color={rgb, 255:red, 126; green, 211; blue, 33 }  ,draw opacity=1 ][line width=0.75]    (10.93,-3.29) .. controls (6.95,-1.4) and (3.31,-0.3) .. (0,0) .. controls (3.31,0.3) and (6.95,1.4) .. (10.93,3.29)   ;
\draw [color={rgb, 255:red, 126; green, 211; blue, 33 }  ,draw opacity=1 ]   (165,80) -- (343.06,124.51) ;
\draw [shift={(345,125)}, rotate = 194.04] [color={rgb, 255:red, 126; green, 211; blue, 33 }  ,draw opacity=1 ][line width=0.75]    (10.93,-3.29) .. controls (6.95,-1.4) and (3.31,-0.3) .. (0,0) .. controls (3.31,0.3) and (6.95,1.4) .. (10.93,3.29)   ;
\draw [color={rgb, 255:red, 126; green, 211; blue, 33 }  ,draw opacity=1 ]   (165,80) -- (463.02,124.7) ;
\draw [shift={(465,125)}, rotate = 188.53] [color={rgb, 255:red, 126; green, 211; blue, 33 }  ,draw opacity=1 ][line width=0.75]    (10.93,-3.29) .. controls (6.95,-1.4) and (3.31,-0.3) .. (0,0) .. controls (3.31,0.3) and (6.95,1.4) .. (10.93,3.29)   ;
\draw [color={rgb, 255:red, 126; green, 211; blue, 33 }  ,draw opacity=1 ]   (165,170) -- (106.6,213.8) ;
\draw [shift={(105,215)}, rotate = 323.13] [color={rgb, 255:red, 126; green, 211; blue, 33 }  ,draw opacity=1 ][line width=0.75]    (10.93,-3.29) .. controls (6.95,-1.4) and (3.31,-0.3) .. (0,0) .. controls (3.31,0.3) and (6.95,1.4) .. (10.93,3.29)   ;
\draw [color={rgb, 255:red, 126; green, 211; blue, 33 }  ,draw opacity=1 ]   (165,170) -- (223.4,213.8) ;
\draw [shift={(225,215)}, rotate = 216.87] [color={rgb, 255:red, 126; green, 211; blue, 33 }  ,draw opacity=1 ][line width=0.75]    (10.93,-3.29) .. controls (6.95,-1.4) and (3.31,-0.3) .. (0,0) .. controls (3.31,0.3) and (6.95,1.4) .. (10.93,3.29)   ;
\draw [color={rgb, 255:red, 126; green, 211; blue, 33 }  ,draw opacity=1 ]   (165,170) -- (343.06,214.51) ;
\draw [shift={(345,215)}, rotate = 194.04] [color={rgb, 255:red, 126; green, 211; blue, 33 }  ,draw opacity=1 ][line width=0.75]    (10.93,-3.29) .. controls (6.95,-1.4) and (3.31,-0.3) .. (0,0) .. controls (3.31,0.3) and (6.95,1.4) .. (10.93,3.29)   ;
\draw [color={rgb, 255:red, 126; green, 211; blue, 33 }  ,draw opacity=1 ]   (165,170) -- (463.02,214.7) ;
\draw [shift={(465,215)}, rotate = 188.53] [color={rgb, 255:red, 126; green, 211; blue, 33 }  ,draw opacity=1 ][line width=0.75]    (10.93,-3.29) .. controls (6.95,-1.4) and (3.31,-0.3) .. (0,0) .. controls (3.31,0.3) and (6.95,1.4) .. (10.93,3.29)   ;
\draw [color={rgb, 255:red, 126; green, 211; blue, 33 }  ,draw opacity=1 ]   (165,260) -- (106.6,303.8) ;
\draw [shift={(105,305)}, rotate = 323.13] [color={rgb, 255:red, 126; green, 211; blue, 33 }  ,draw opacity=1 ][line width=0.75]    (10.93,-3.29) .. controls (6.95,-1.4) and (3.31,-0.3) .. (0,0) .. controls (3.31,0.3) and (6.95,1.4) .. (10.93,3.29)   ;
\draw [color={rgb, 255:red, 126; green, 211; blue, 33 }  ,draw opacity=1 ]   (165,260) -- (223.4,303.8) ;
\draw [shift={(225,305)}, rotate = 216.87] [color={rgb, 255:red, 126; green, 211; blue, 33 }  ,draw opacity=1 ][line width=0.75]    (10.93,-3.29) .. controls (6.95,-1.4) and (3.31,-0.3) .. (0,0) .. controls (3.31,0.3) and (6.95,1.4) .. (10.93,3.29)   ;
\draw [color={rgb, 255:red, 126; green, 211; blue, 33 }  ,draw opacity=1 ]   (165,260) -- (343.06,304.51) ;
\draw [shift={(345,305)}, rotate = 194.04] [color={rgb, 255:red, 126; green, 211; blue, 33 }  ,draw opacity=1 ][line width=0.75]    (10.93,-3.29) .. controls (6.95,-1.4) and (3.31,-0.3) .. (0,0) .. controls (3.31,0.3) and (6.95,1.4) .. (10.93,3.29)   ;
\draw [color={rgb, 255:red, 126; green, 211; blue, 33 }  ,draw opacity=1 ]   (165,260) -- (463.02,304.7) ;
\draw [shift={(465,305)}, rotate = 188.53] [color={rgb, 255:red, 126; green, 211; blue, 33 }  ,draw opacity=1 ][line width=0.75]    (10.93,-3.29) .. controls (6.95,-1.4) and (3.31,-0.3) .. (0,0) .. controls (3.31,0.3) and (6.95,1.4) .. (10.93,3.29)   ;
\draw [color={rgb, 255:red, 126; green, 211; blue, 33 }  ,draw opacity=1 ]   (165,350) -- (106.6,393.8) ;
\draw [shift={(105,395)}, rotate = 323.13] [color={rgb, 255:red, 126; green, 211; blue, 33 }  ,draw opacity=1 ][line width=0.75]    (10.93,-3.29) .. controls (6.95,-1.4) and (3.31,-0.3) .. (0,0) .. controls (3.31,0.3) and (6.95,1.4) .. (10.93,3.29)   ;
\draw [color={rgb, 255:red, 126; green, 211; blue, 33 }  ,draw opacity=1 ]   (165,350) -- (223.4,393.8) ;
\draw [shift={(225,395)}, rotate = 216.87] [color={rgb, 255:red, 126; green, 211; blue, 33 }  ,draw opacity=1 ][line width=0.75]    (10.93,-3.29) .. controls (6.95,-1.4) and (3.31,-0.3) .. (0,0) .. controls (3.31,0.3) and (6.95,1.4) .. (10.93,3.29)   ;
\draw [color={rgb, 255:red, 126; green, 211; blue, 33 }  ,draw opacity=1 ]   (165,350) -- (343.06,394.51) ;
\draw [shift={(345,395)}, rotate = 194.04] [color={rgb, 255:red, 126; green, 211; blue, 33 }  ,draw opacity=1 ][line width=0.75]    (10.93,-3.29) .. controls (6.95,-1.4) and (3.31,-0.3) .. (0,0) .. controls (3.31,0.3) and (6.95,1.4) .. (10.93,3.29)   ;
\draw [color={rgb, 255:red, 126; green, 211; blue, 33 }  ,draw opacity=1 ]   (165,350) -- (463.02,394.7) ;
\draw [shift={(465,395)}, rotate = 188.53] [color={rgb, 255:red, 126; green, 211; blue, 33 }  ,draw opacity=1 ][line width=0.75]    (10.93,-3.29) .. controls (6.95,-1.4) and (3.31,-0.3) .. (0,0) .. controls (3.31,0.3) and (6.95,1.4) .. (10.93,3.29)   ;
\draw [color={rgb, 255:red, 139; green, 87; blue, 42 }  ,draw opacity=0.53 ]   (285,80) -- (285,168) ;
\draw [shift={(285,170)}, rotate = 270] [color={rgb, 255:red, 139; green, 87; blue, 42 }  ,draw opacity=0.53 ][line width=0.75]    (10.93,-3.29) .. controls (6.95,-1.4) and (3.31,-0.3) .. (0,0) .. controls (3.31,0.3) and (6.95,1.4) .. (10.93,3.29)   ;
\draw [color={rgb, 255:red, 139; green, 87; blue, 42 }  ,draw opacity=0.53 ]   (285,170) -- (285,258) ;
\draw [shift={(285,260)}, rotate = 270] [color={rgb, 255:red, 139; green, 87; blue, 42 }  ,draw opacity=0.53 ][line width=0.75]    (10.93,-3.29) .. controls (6.95,-1.4) and (3.31,-0.3) .. (0,0) .. controls (3.31,0.3) and (6.95,1.4) .. (10.93,3.29)   ;
\draw [color={rgb, 255:red, 139; green, 87; blue, 42 }  ,draw opacity=0.53 ]   (285,260) -- (285,348) ;
\draw [shift={(285,350)}, rotate = 270] [color={rgb, 255:red, 139; green, 87; blue, 42 }  ,draw opacity=0.53 ][line width=0.75]    (10.93,-3.29) .. controls (6.95,-1.4) and (3.31,-0.3) .. (0,0) .. controls (3.31,0.3) and (6.95,1.4) .. (10.93,3.29)   ;
\draw [color={rgb, 255:red, 208; green, 2; blue, 27 }  ,draw opacity=0.42 ]   (405,80) -- (405,168) ;
\draw [shift={(405,170)}, rotate = 270] [color={rgb, 255:red, 208; green, 2; blue, 27 }  ,draw opacity=0.42 ][line width=0.75]    (10.93,-3.29) .. controls (6.95,-1.4) and (3.31,-0.3) .. (0,0) .. controls (3.31,0.3) and (6.95,1.4) .. (10.93,3.29)   ;
\draw [color={rgb, 255:red, 208; green, 2; blue, 27 }  ,draw opacity=0.42 ]   (405,170) -- (405,258) ;
\draw [shift={(405,260)}, rotate = 270] [color={rgb, 255:red, 208; green, 2; blue, 27 }  ,draw opacity=0.42 ][line width=0.75]    (10.93,-3.29) .. controls (6.95,-1.4) and (3.31,-0.3) .. (0,0) .. controls (3.31,0.3) and (6.95,1.4) .. (10.93,3.29)   ;
\draw [color={rgb, 255:red, 208; green, 2; blue, 27 }  ,draw opacity=0.42 ]   (405,260) -- (405,348) ;
\draw [shift={(405,350)}, rotate = 270] [color={rgb, 255:red, 208; green, 2; blue, 27 }  ,draw opacity=0.42 ][line width=0.75]    (10.93,-3.29) .. controls (6.95,-1.4) and (3.31,-0.3) .. (0,0) .. controls (3.31,0.3) and (6.95,1.4) .. (10.93,3.29)   ;
\draw  [color={rgb, 255:red, 139; green, 87; blue, 42 }  ,draw opacity=1 ] (270,440) .. controls (270,431.72) and (276.72,425) .. (285,425) .. controls (293.28,425) and (300,431.72) .. (300,440) .. controls (300,448.28) and (293.28,455) .. (285,455) .. controls (276.72,455) and (270,448.28) .. (270,440) -- cycle ;
\draw  [color={rgb, 255:red, 208; green, 2; blue, 27 }  ,draw opacity=1 ] (390,440) .. controls (390,431.72) and (396.72,425) .. (405,425) .. controls (413.28,425) and (420,431.72) .. (420,440) .. controls (420,448.28) and (413.28,455) .. (405,455) .. controls (396.72,455) and (390,448.28) .. (390,440) -- cycle ;
\draw [color={rgb, 255:red, 139; green, 87; blue, 42 }  ,draw opacity=0.53 ]   (285,350) -- (285,438) ;
\draw [shift={(285,440)}, rotate = 270] [color={rgb, 255:red, 139; green, 87; blue, 42 }  ,draw opacity=0.53 ][line width=0.75]    (10.93,-3.29) .. controls (6.95,-1.4) and (3.31,-0.3) .. (0,0) .. controls (3.31,0.3) and (6.95,1.4) .. (10.93,3.29)   ;
\draw [color={rgb, 255:red, 208; green, 2; blue, 27 }  ,draw opacity=0.42 ]   (405,350) -- (405,438) ;
\draw [shift={(405,440)}, rotate = 270] [color={rgb, 255:red, 208; green, 2; blue, 27 }  ,draw opacity=0.42 ][line width=0.75]    (10.93,-3.29) .. controls (6.95,-1.4) and (3.31,-0.3) .. (0,0) .. controls (3.31,0.3) and (6.95,1.4) .. (10.93,3.29)   ;
\draw [color={rgb, 255:red, 208; green, 2; blue, 27 }  ,draw opacity=0.42 ]   (465,125) -- (406.6,168.8) ;
\draw [shift={(405,170)}, rotate = 323.13] [color={rgb, 255:red, 208; green, 2; blue, 27 }  ,draw opacity=0.42 ][line width=0.75]    (10.93,-3.29) .. controls (6.95,-1.4) and (3.31,-0.3) .. (0,0) .. controls (3.31,0.3) and (6.95,1.4) .. (10.93,3.29)   ;
\draw [color={rgb, 255:red, 208; green, 2; blue, 27 }  ,draw opacity=0.42 ]   (465,215) -- (406.6,258.8) ;
\draw [shift={(405,260)}, rotate = 323.13] [color={rgb, 255:red, 208; green, 2; blue, 27 }  ,draw opacity=0.42 ][line width=0.75]    (10.93,-3.29) .. controls (6.95,-1.4) and (3.31,-0.3) .. (0,0) .. controls (3.31,0.3) and (6.95,1.4) .. (10.93,3.29)   ;
\draw [color={rgb, 255:red, 208; green, 2; blue, 27 }  ,draw opacity=0.42 ]   (465,305) -- (406.6,348.8) ;
\draw [shift={(405,350)}, rotate = 323.13] [color={rgb, 255:red, 208; green, 2; blue, 27 }  ,draw opacity=0.42 ][line width=0.75]    (10.93,-3.29) .. controls (6.95,-1.4) and (3.31,-0.3) .. (0,0) .. controls (3.31,0.3) and (6.95,1.4) .. (10.93,3.29)   ;
\draw [color={rgb, 255:red, 208; green, 2; blue, 27 }  ,draw opacity=0.42 ]   (465,395) -- (406.6,438.8) ;
\draw [shift={(405,440)}, rotate = 323.13] [color={rgb, 255:red, 208; green, 2; blue, 27 }  ,draw opacity=0.42 ][line width=0.75]    (10.93,-3.29) .. controls (6.95,-1.4) and (3.31,-0.3) .. (0,0) .. controls (3.31,0.3) and (6.95,1.4) .. (10.93,3.29)   ;
\draw [color={rgb, 255:red, 208; green, 2; blue, 27 }  ,draw opacity=0.42 ]   (105,395) -- (403.02,439.7) ;
\draw [shift={(405,440)}, rotate = 188.53] [color={rgb, 255:red, 208; green, 2; blue, 27 }  ,draw opacity=0.42 ][line width=0.75]    (10.93,-3.29) .. controls (6.95,-1.4) and (3.31,-0.3) .. (0,0) .. controls (3.31,0.3) and (6.95,1.4) .. (10.93,3.29)   ;
\draw [color={rgb, 255:red, 208; green, 2; blue, 27 }  ,draw opacity=0.42 ]   (225,395) -- (403.06,439.51) ;
\draw [shift={(405,440)}, rotate = 194.04] [color={rgb, 255:red, 208; green, 2; blue, 27 }  ,draw opacity=0.42 ][line width=0.75]    (10.93,-3.29) .. controls (6.95,-1.4) and (3.31,-0.3) .. (0,0) .. controls (3.31,0.3) and (6.95,1.4) .. (10.93,3.29)   ;
\draw [color={rgb, 255:red, 208; green, 2; blue, 27 }  ,draw opacity=0.42 ]   (345,395) -- (403.4,438.8) ;
\draw [shift={(405,440)}, rotate = 216.87] [color={rgb, 255:red, 208; green, 2; blue, 27 }  ,draw opacity=0.42 ][line width=0.75]    (10.93,-3.29) .. controls (6.95,-1.4) and (3.31,-0.3) .. (0,0) .. controls (3.31,0.3) and (6.95,1.4) .. (10.93,3.29)   ;
\draw [color={rgb, 255:red, 208; green, 2; blue, 27 }  ,draw opacity=0.42 ]   (105,305) -- (403.02,349.7) ;
\draw [shift={(405,350)}, rotate = 188.53] [color={rgb, 255:red, 208; green, 2; blue, 27 }  ,draw opacity=0.42 ][line width=0.75]    (10.93,-3.29) .. controls (6.95,-1.4) and (3.31,-0.3) .. (0,0) .. controls (3.31,0.3) and (6.95,1.4) .. (10.93,3.29)   ;
\draw [color={rgb, 255:red, 208; green, 2; blue, 27 }  ,draw opacity=0.42 ]   (105,215) -- (403.02,259.7) ;
\draw [shift={(405,260)}, rotate = 188.53] [color={rgb, 255:red, 208; green, 2; blue, 27 }  ,draw opacity=0.42 ][line width=0.75]    (10.93,-3.29) .. controls (6.95,-1.4) and (3.31,-0.3) .. (0,0) .. controls (3.31,0.3) and (6.95,1.4) .. (10.93,3.29)   ;
\draw [color={rgb, 255:red, 208; green, 2; blue, 27 }  ,draw opacity=0.42 ]   (225,305) -- (403.06,349.51) ;
\draw [shift={(405,350)}, rotate = 194.04] [color={rgb, 255:red, 208; green, 2; blue, 27 }  ,draw opacity=0.42 ][line width=0.75]    (10.93,-3.29) .. controls (6.95,-1.4) and (3.31,-0.3) .. (0,0) .. controls (3.31,0.3) and (6.95,1.4) .. (10.93,3.29)   ;
\draw [color={rgb, 255:red, 208; green, 2; blue, 27 }  ,draw opacity=0.42 ]   (225,125) -- (403.06,169.51) ;
\draw [shift={(405,170)}, rotate = 194.04] [color={rgb, 255:red, 208; green, 2; blue, 27 }  ,draw opacity=0.42 ][line width=0.75]    (10.93,-3.29) .. controls (6.95,-1.4) and (3.31,-0.3) .. (0,0) .. controls (3.31,0.3) and (6.95,1.4) .. (10.93,3.29)   ;
\draw [color={rgb, 255:red, 208; green, 2; blue, 27 }  ,draw opacity=0.42 ]   (345,215) -- (403.4,258.8) ;
\draw [shift={(405,260)}, rotate = 216.87] [color={rgb, 255:red, 208; green, 2; blue, 27 }  ,draw opacity=0.42 ][line width=0.75]    (10.93,-3.29) .. controls (6.95,-1.4) and (3.31,-0.3) .. (0,0) .. controls (3.31,0.3) and (6.95,1.4) .. (10.93,3.29)   ;
\draw [color={rgb, 255:red, 208; green, 2; blue, 27 }  ,draw opacity=0.42 ]   (345,125) -- (403.4,168.8) ;
\draw [shift={(405,170)}, rotate = 216.87] [color={rgb, 255:red, 208; green, 2; blue, 27 }  ,draw opacity=0.42 ][line width=0.75]    (10.93,-3.29) .. controls (6.95,-1.4) and (3.31,-0.3) .. (0,0) .. controls (3.31,0.3) and (6.95,1.4) .. (10.93,3.29)   ;
\draw [color={rgb, 255:red, 144; green, 19; blue, 254 }  ,draw opacity=0.42 ]   (105,125) -- (163.4,168.8) ;
\draw [shift={(165,170)}, rotate = 216.87] [color={rgb, 255:red, 144; green, 19; blue, 254 }  ,draw opacity=0.42 ][line width=0.75]    (10.93,-3.29) .. controls (6.95,-1.4) and (3.31,-0.3) .. (0,0) .. controls (3.31,0.3) and (6.95,1.4) .. (10.93,3.29)   ;
\draw [color={rgb, 255:red, 139; green, 87; blue, 42 }  ,draw opacity=0.53 ]   (345,305) -- (286.6,348.8) ;
\draw [shift={(285,350)}, rotate = 323.13] [color={rgb, 255:red, 139; green, 87; blue, 42 }  ,draw opacity=0.53 ][line width=0.75]    (10.93,-3.29) .. controls (6.95,-1.4) and (3.31,-0.3) .. (0,0) .. controls (3.31,0.3) and (6.95,1.4) .. (10.93,3.29)   ;
\draw [color={rgb, 255:red, 144; green, 19; blue, 254 }  ,draw opacity=0.42 ]   (225,215) -- (166.6,258.8) ;
\draw [shift={(165,260)}, rotate = 323.13] [color={rgb, 255:red, 144; green, 19; blue, 254 }  ,draw opacity=0.42 ][line width=0.75]    (10.93,-3.29) .. controls (6.95,-1.4) and (3.31,-0.3) .. (0,0) .. controls (3.31,0.3) and (6.95,1.4) .. (10.93,3.29)   ;

\draw (92,120) node [anchor=north west][inner sep=0.75pt]   [align=left] {$\displaystyle s_{1,1}$};
\draw (212,120) node [anchor=north west][inner sep=0.75pt]   [align=left] {$\displaystyle s_{1,2}$};
\draw (332,120) node [anchor=north west][inner sep=0.75pt]   [align=left] {$\displaystyle s_{1,3}$};
\draw (452,120) node [anchor=north west][inner sep=0.75pt]   [align=left] {$\displaystyle s_{1,4}$};
\draw (92,210) node [anchor=north west][inner sep=0.75pt]   [align=left] {$\displaystyle s_{2,1}$};
\draw (212,210) node [anchor=north west][inner sep=0.75pt]   [align=left] {$\displaystyle s_{2,2}$};
\draw (332,210) node [anchor=north west][inner sep=0.75pt]   [align=left] {$\displaystyle s_{2,3}$};
\draw (452,210) node [anchor=north west][inner sep=0.75pt]   [align=left] {$\displaystyle s_{2,4}$};
\draw (92,300) node [anchor=north west][inner sep=0.75pt]   [align=left] {$\displaystyle s_{3,1}$};
\draw (212,300) node [anchor=north west][inner sep=0.75pt]   [align=left] {$\displaystyle s_{3,2}$};
\draw (332,300) node [anchor=north west][inner sep=0.75pt]   [align=left] {$\displaystyle s_{3,3}$};
\draw (452,300) node [anchor=north west][inner sep=0.75pt]   [align=left] {$\displaystyle s_{3,4}$};
\draw (92,390) node [anchor=north west][inner sep=0.75pt]   [align=left] {$\displaystyle s_{4,1}$};
\draw (212,390) node [anchor=north west][inner sep=0.75pt]   [align=left] {$\displaystyle s_{4,2}$};
\draw (332,390) node [anchor=north west][inner sep=0.75pt]   [align=left] {$\displaystyle s_{4,3}$};
\draw (452,390) node [anchor=north west][inner sep=0.75pt]   [align=left] {$\displaystyle s_{4,4}$};
\draw (157,255) node [anchor=north west][inner sep=0.75pt]   [align=left] {$\displaystyle u_{3}$};
\draw (157,345) node [anchor=north west][inner sep=0.75pt]   [align=left] {$\displaystyle u_{4}$};
\draw (157,165) node [anchor=north west][inner sep=0.75pt]   [align=left] {$\displaystyle u_{2}$};
\draw (157,75) node [anchor=north west][inner sep=0.75pt]   [align=left] {$\displaystyle u_{1}$};
\draw (277,255) node [anchor=north west][inner sep=0.75pt]   [align=left] {$\displaystyle v_{3}$};
\draw (277,345) node [anchor=north west][inner sep=0.75pt]   [align=left] {$\displaystyle v_{4}$};
\draw (277,165) node [anchor=north west][inner sep=0.75pt]   [align=left] {$\displaystyle v_{2}$};
\draw (277,75) node [anchor=north west][inner sep=0.75pt]   [align=left] {$\displaystyle v_{1}$};
\draw (397,255) node [anchor=north west][inner sep=0.75pt]   [align=left] {$\displaystyle w_{3}$};
\draw (397,345) node [anchor=north west][inner sep=0.75pt]   [align=left] {$\displaystyle w_{4}$};
\draw (397,165) node [anchor=north west][inner sep=0.75pt]   [align=left] {$\displaystyle w_{2}$};
\draw (397,75) node [anchor=north west][inner sep=0.75pt]   [align=left] {$\displaystyle w_{1}$};
\draw (277,435) node [anchor=north west][inner sep=0.75pt]   [align=left] {$\displaystyle v_{5}$};
\draw (397,435) node [anchor=north west][inner sep=0.75pt]   [align=left] {$\displaystyle w_{5}$};

\end{tikzpicture}

\caption{An illustration of the hard instance constructed above}
\end{figure}
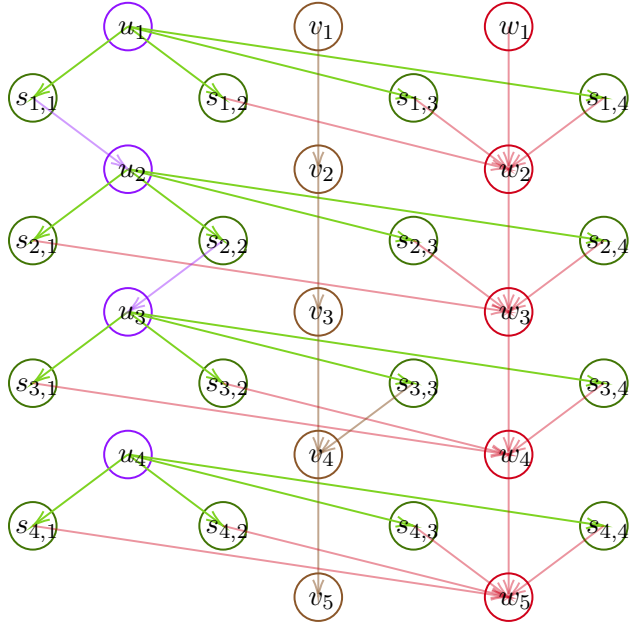

Clearly we have
\begin{equation*}
\begin{split}
\mathbb{E}_{M_{*}}[{V_1^\pi(x_0)}] =& \E_{M_{*}} \left[ \sum_{h = 1}^H \mathbf{1} \{ x_h = v \} \right]\\
\E_{M_{*}}[V_1^*(x_0)] =& \E_{M_{*}}[H - 2h_*] = H_0\\
\end{split}
\end{equation*}

\subsection{Minimax Lower Bound}
\begin{equation*}
\begin{split}
&\sup_{M \in \mathcal{M}} \mathbb{E}_{s_1, \mathcal{M}}{\left[ \sum_{k = 1}^{K} V_1^*(s_1) - V_1^{\pi_k}(s_1) \right ]} \\
\ge& \E_{M_{*}} [\sum_{k = 1}^K V_1^* (x_0) - V_1^{\pi_k} (x_0)]\\
=& KH_0 - \sum_{k = 1}^K \E_{M_{*}}[V_1^{\pi_k}(x_0)]
\end{split}
\end{equation*}
It remains to upper bound $\E_{M_{*}}[V_1^{\pi_k}(x_0)]$ for each $k.$

For all $k \ge 1,$ let 
$$ N_{\rm switch}^k = \sum_{j = 1}^{k - 1} \mathbf{1} \{ \pi_{j} \not = \pi_{j+1} \} $$ 
denote the switching cost at episode $K$. Let
$$S_* :=  \{ s_{1, i_1}, s_{2, i_2}, \cdots, s_{h_*, i_{h_*}} \}$$
be the correct path leading to state $v,$
\begin{equation*}
\begin{split}
S_k := & \{ \tilde s_{h_1, i_1}, \tilde s_{h_2, i_2}, \cdots, \tilde s_{h_{r}, i_{r}} \}\\
\end{split}
\end{equation*}
be the ordered set of the states $s_{h, i}$ that have been reached throughout the execution of the algorithm,
$$ S_{k}^{\tau} := \{ \tilde s _{\tau, i_1}, \tilde s _{\tau, i_2}, \cdots, \tilde s _{{\tau}, i_{r_{\tau}}} \} $$
be the states throughout the exploration of $s_{\tau, i_{\tau}},$ i.e. $S_{k}^{\tau} = S_k \cap \{ s_{\tau, i} | i \in [d_0] \}.$

We begin by observing that $r = \sum_{\tau = 1}^{h_*} r_{\tau} \le N_{\rm switch}^k + H_0 + 1,$ i.e., after changing the policy for $N_{\rm switch}^k$ times, the algorithm can only explore at most $N_{\rm switch}^k$ states in ${S'}$ except for the correct path. In fact, we know that if $x_{2h} \in S' - S_*$ for some $h,$ then $x_{2h+1} = w$
 and so do the rest steps. Thus as long as the algorithm makes a mistake at some step, it can only explore one more state in $S' - S_*.$
 
More precisely, if the algorithm has already known the structure of this MDP, clearly it still need to find the correct path $S_*$ in order to achieve reward 1. 


In this way,
\begin{equation*}
\begin{split}
\E_{M_{*}}[V_1^{\pi_k}(x_0)] = &\E_{M_{*}} \left[ \sum_{h = 1}^H \mathbf{1} \{ x_h = v \} \right]\\
\le & H \cdot \E _{M_{*}} \left[ \mathbf{1} \{ x_H = v \} \right]\\
\le & H \cdot \E_{M_{*}} [ {\mathbb{P}} \left(S_* \in S_k\right) ]\\
\le & H / H_0 \cdot \sum_{h_* =  1}^{H_0} \sum_{\tau = 1}^{h_*} {\mathbb{P}}  \left( s_{\tau, i_{\tau}} \in S_k^{\tau}  \right)\\
=& 2 \sum_{h_* =  1}^{H_0}\sum_{\tau = 1}^{h_*} {\mathbb{P}} \Bigg ( \bigcup_{j \ge 1} \Big \{ r_{\tau} \ge j , s_{\tau, i_{\tau}} \not \in \{ \tilde s _{\tau, i_1}, \tilde s _{\tau, i_2}, \cdots, \tilde s _{{\tau}, i_{j-1}} \}, s_{\tau, i_{\tau}} = \tilde s _{{\tau}, i_{j}} \Big \} \Bigg )\\
=& 2 \sum_{h_* =  1}^{H_0}\sum_{\tau = 1}^{h_*} \sum_{j \ge 1} \PR{r_{\tau} \ge j} \cdot \PR{s_{\tau, i_{\tau}} \not \in \{ \tilde s _{\tau, i_1}, \tilde s _{\tau, i_2}, \cdots, \tilde s _{{\tau}, i_{j-1}} \}, s_{\tau, i_{\tau}} = \tilde s _{{\tau}, i_{j}} | r_{\tau} \ge j }\\
\end{split}
\end{equation*}

Now suppose that we know $r_{\tau} \ge j.$ Noticing that $ s_{\tau, i_{\tau}} \sim \rm{Unif}$$( \{ s_{\tau, i} | i \in [d_0] \} ),$ we have
\begin{equation*}
\begin{split}
&\PR{s_{\tau, i_{\tau}} \not \in \{ \tilde s _{\tau, i_1}, \tilde s _{\tau, i_2}, \cdots, \tilde s _{{\tau}, i_{j-1}} \}, s_{\tau, i_{\tau}} = \tilde s _{{\tau}, i_{j}} | r_{\tau} \ge j } \\
=& \prod_{\gamma = 1}^{j - 1} \frac{d_0 - \gamma}{d_0 - \gamma + 1} \cdot \frac{1}{d_0 - j + 1} = \frac{1}{d_0} \\
\end{split}
\end{equation*}
Substituting this into the preceding bound gives
\begin{equation*}
\begin{split}
\E_{M_{*}}[V_1^{\pi_k}(x_0)] \le & 2\sum_{h_* =  1}^{H_0}\sum_{\tau = 1}^{h_*} \sum_{j \ge 1}\PR{r_\tau \ge j}/d_0\\
= & 2/d_0 \sum_{h_* =  1}^{H_0}\sum_{\tau = 1} ^{h_*} \E [r_\tau] = 2/d_0 \sum_{h_* = 1}^{H_0} \E \left [r\right]\\
\le & \E[ N_{\rm{switch}}^k + H ] \cdot 2/d_0 \le \E[ N_{\rm{switch}}^{\rm gl} + H ] \cdot 2/d_0\\
\le & H_0 / 2 \\
\end{split}
\end{equation*}
as $N_{\rm switch}^{\rm gl} \le dH/100$ almost surely and $d \ge 100$. And thus
\begin{equation*}
\begin{split}
&\sup_{M \in \mathcal{M}} \mathbb{E}_{s_1, \mathcal{M}}{\left[ \sum_{k = 1}^{K} V_1^*(s_1) - V_1^{\pi_k}(s_1) \right ]} \\
&\ge KH_0 - \sum_{k = 1}^K \E_{M_{*}}[V_1^{\pi_k}(x_0)]\\\
&\ge KH_0 -KH_0/2\\
& = KH/4\\
\end{split}
\end{equation*}

\paragraph{Remark}
In fact, we can reduce states in $S_0$ and only reserve three states $\{u, v, w \}$ with similar structure: the agent needs to find the correct action set $\{ a_{i_h} \}$ for $h_*$ steps. In this way, we can use identical $d$ actions in each states and thus we provide a tighter lower bound $\Omega(dH).$

\end{document}